\documentclass{article} 
\usepackage{iclr2027_conference,times}
\iclrfinalcopy
\renewcommand{\headrulewidth}{0pt}

\usepackage{amsmath,amsfonts,bm}

\def\eqref#1{equation~\ref{#1}}

\def\1{\bm{1}}

\DeclareMathAlphabet{\mathsfit}{\encodingdefault}{\sfdefault}{m}{sl}
\SetMathAlphabet{\mathsfit}{bold}{\encodingdefault}{\sfdefault}{bx}{n}

\DeclareMathOperator*{\argmin}{arg\,min}

\usepackage[colorlinks=true,linkcolor=blue,citecolor=blue,urlcolor=blue]{hyperref}
\usepackage{hyperref}
\usepackage{url}
\usepackage{wrapfig}
\usepackage{booktabs}
\usepackage{amsmath}
\usepackage{amssymb}
\usepackage{mathtools}
\usepackage{amsthm}
\usepackage{comment}
\theoremstyle{plain}
\newtheorem{theorem}{Theorem}[section]

\newtheorem{lemma}[theorem]{Lemma}
\newtheorem{example}[theorem]{Example}
\newtheorem{corollary}[theorem]{Corollary}
\theoremstyle{definition}
\newtheorem{definition}[theorem]{Definition}

\theoremstyle{remark}
\newtheorem{remark}[theorem]{Remark}
\usepackage{tikz}
\usetikzlibrary{arrows.meta,backgrounds}

\definecolor{gdblue}{HTML}{E68613}       
\definecolor{muonpurple}{HTML}{377EB8} 
\definecolor{noisegreen}{HTML}{2E8B57}   

\definecolor{modeone}{HTML}{C44E52}      
\definecolor{modetwo}{HTML}{8172B2}      
\definecolor{modethree}{HTML}{4C4C4C}    

\definecolor{textgray}{HTML}{333333}
\usepackage{graphicx}
\title{From Parallel to Sequential: Small-Batch Noise Restores Simplicity Bias of Muon}

\author{Tom Jacobs \& Rebekka Burkholz \\
CISPA Helmholtz Center for Information Security, Germany \\
\texttt{\{tom.jacobs,burkholz\}@cispa.de} 
}

\newcommand{\muongdschematic}{%
\begin{tikzpicture}[
  font=\sffamily,
  >=Latex,
  line width=0.6pt,
  head/.style={font=\sffamily\bfseries\small, text=textgray},
  card/.style={rounded corners=2pt, minimum height=1.18cm, align=center,
               inner xsep=7pt, inner ysep=4pt},
  regime/.style={card, minimum width=2.55cm, draw=##1, fill=##1!7},
  update/.style={card, minimum width=3.50cm, draw=##1!80!black, fill=white},
  magbox/.style={card, minimum width=2.05cm, draw=##1!80!black, fill=##1!4},
  result/.style={card, minimum width=4.80cm, draw=##1!80!black, fill=##1!4},
  flow/.style={->, draw=textgray!60, line width=0.8pt},
  tinylabel/.style={font=\sffamily\footnotesize, text=textgray},
  maglabel/.style={font=\sffamily\footnotesize\bfseries, text=textgray,
                   fill=white, inner sep=0.8pt},
  pics/magnitudes/.style args={##1/##2/##3}{code={
    \draw[->, textgray!45] (-0.10,-0.32) -- (1.42,-0.32);
    \fill[modeone]   (0.06,-0.32) rectangle (0.31,##1);
    \fill[modetwo]   (0.52,-0.32) rectangle (0.77,##2);
    \fill[modethree] (0.98,-0.32) rectangle (1.23,##3);
  }},
  pics/times/.style args={##1/##2/##3}{code={
    \draw[->, textgray!45] (0,-0.3) -- (2.15,-0.3) node[left=59pt, above=-2pt, font=\tiny] {$t$};
    \draw[textgray!12] (0,0.3) -- (2.00,0.3);
    \draw[textgray!12] (0,0.00) -- (2.00,0.00);
    \draw[textgray!45] (0,-0.3) -- (2.00,-0.3);
    \fill[modeone]   (##1,0.3) circle (2.7pt);
    \fill[modetwo]   (##2,0.00) circle (2.7pt);
    \fill[modethree] (##3,-0.3) circle (2.7pt);
  }}
]

\node[head] at (0,-0.3) {setting};
\node[head] at (3.20,-0.3) {update};
\node[head] at (6.25,-0.3) {$|G|$};
\node[head] at (10.00,-0.3) {feature activation};

\node[regime=gdblue] (gd) at (0,-1.35)
  {\textbf{GD}\\[-1pt]\footnotesize any $B$};
\node[update=gdblue] (gdu) at (3.20,-1.35) {$-G$};
\node[magbox=gdblue] (gdm) at (6.25,-1.35) {};
\pic at ([xshift=-0.70cm]gdm.center) {magnitudes=0.26/0.05/-0.15};
\node[maglabel] at ([yshift=0.39cm]gdm.center) {kept};
\node[result=gdblue] (gdr) at (10.00,-1.35) {};
\node[tinylabel] at ([xshift=-1.27cm,yshift=0.41cm]gdr.center) {$\theta_0\!\downarrow$};
\node[tinylabel] at ([xshift=1.27cm,yshift=0.41cm]gdr.center) {$\theta_0\!\uparrow$};
\pic at ([xshift=-2.02cm,yshift=-0.10cm]gdr.center) {times=0.35/1.02/1.68};
\pic at ([xshift=0.15cm,yshift=-0.10cm]gdr.center) {times=0.82/1.02/1.22};

\node[regime=muonpurple] (ml) at (0,-3.02)
  {\textbf{Muon}\\[-1pt]\footnotesize $B\!\uparrow$};
\node[update=muonpurple] (mlu) at (3.20,-3.02) {$-\operatorname{Sign}(G)$};
\node[magbox=muonpurple] (mlm) at (6.25,-3.02) {};
\pic at ([xshift=-0.70cm]mlm.center) {magnitudes=0.10/0.10/0.10};
\node[maglabel] at ([yshift=0.39cm]mlm.center) {flattened};
\node[result=muonpurple] (mlr) at (10.00,-3.02) {};
\node[tinylabel] at ([yshift=0.41cm]mlr.center) {$\theta_0\!\downarrow\ \text{or}\ \theta_0\!\uparrow$};
\pic at ([xshift=-1.02cm,yshift=-0.10cm]mlr.center) {times=1.02/1.02/1.02};

\node[regime=noisegreen] (ms) at (0,-4.69)
  {\textbf{Muon}\\[-1pt]\footnotesize $B\!\downarrow,\ \sigma\!\uparrow$};
\node[update=noisegreen] (msu) at (3.20,-4.69)
  {$-\mathbb E\!\left[\operatorname{Sign}(G+\xi)\right]$\\[-1pt]
   \footnotesize $\approx-c(\sigma)G$};
\node[magbox=noisegreen] (msm) at (6.25,-4.69) {};
\pic at ([xshift=-0.70cm]msm.center) {magnitudes=0.26/0.05/-0.15};
\node[maglabel] at ([yshift=0.39cm]msm.center) {restored};
\node[result=noisegreen] (msr) at (10.00,-4.69) {};
\node[tinylabel] at ([xshift=-1.27cm,yshift=0.41cm]msr.center) {$\theta_0\!\downarrow$};
\node[tinylabel] at ([xshift=1.27cm,yshift=0.41cm]msr.center) {$\theta_0\!\uparrow$};
\pic at ([xshift=-2.02cm,yshift=-0.10cm]msr.center) {times=0.35/1.02/1.68};
\pic at ([xshift=0.15cm,yshift=-0.10cm]msr.center) {times=0.82/1.02/1.22};

\foreach \a/\b in {gd/gdu,gdu/gdm,gdm/gdr,ml/mlu,mlu/mlm,mlm/mlr,ms/msu,msu/msm,msm/msr}
  \draw[flow] (\a.east) -- (\b.west);

\begin{scope}[on background layer]
  \draw[textgray!12] (-1.35,-2.18) -- (12.50,-2.18);
  \draw[textgray!12] (-1.35,-3.85) -- (12.50,-3.85);
\end{scope}

\end{tikzpicture}
}

\begin{document}

\maketitle

\begin{abstract}%
Simplicity bias, which can arise from sequential feature learning during training, has been proposed as a general explanation for the success of deep learning. An important driver of this phenomenon is saddle-to-saddle behavior, in which the gradient flow trajectory remains close to a sequence of saddle manifolds while learning one feature at a time. However, Muon, a state-of-the-art optimizer based on spectral descent, does not exhibit such dynamics in the full-batch setting and instead learns features in parallel. 
How is it then that Muon is so successful in deep learning settings? We show that stochastic gradient noise fundamentally changes this behavior: small-batch noise restores sequential feature learning, providing a mechanism for simplicity bias that is absent from full-batch Muon. We explain this transition through stochastic smoothing of the spectral descent operator underlying Muon. When gradients are small relative to the noise scale, the expected spectral update becomes approximately proportional to the actual gradient, which restores gradient-flow-like dynamics. We formalize this mechanism for deep linear models by deriving the effective continuous dynamics of noisy spectral descent. Experiments on low-rank matrix reconstruction, vision transformers, and large language models (LLMs) support these findings. 
\end{abstract}

\section{Introduction}

Deep neural networks are heavily overparameterized: Many different predictors
can fit the training data, yet, among the many existing solutions optimization often selects ones that
generalize well. 
One proposed explanation is a simplicity bias: optimization preferentially learns simpler predictors. Sequential feature learning can produce such a bias when simpler features are learned before more complex ones.
At the level of the optimization trajectory, this behavior is often described
through saddle-to-saddle dynamics, where the trajectory moves through a
sequence of increasingly complex regions of parameter space
\citep{pesme2023saddletosaddle,NEURIPS2022_7eeb9af3, saxe2014exactsolutionsnonlineardynamics}. 

\begin{figure}[t]
     \centering
     \resizebox{0.91\linewidth}{!}{\muongdschematic}
     \caption{\textbf{Batch size $B$ controls the simplicity bias of Muon.}
Gradient descent preserves mode-dependent gradient magnitude ($|G|$), producing sequential feature activation from small initialization. In contrast, full-batch Muon ($B \uparrow$) flattens these magnitudes and learns modes in parallel, whereas small-batch noise ($\sigma \uparrow$) smooths the spectral sign operator and restores initialization-dependent sequential learning.}\label{figure : schematic introduction}
\end{figure}

This becomes less clear once we go beyond gradient descent. Muon \cite{jordan2024muon},
a state-of-the-art optimizer based on spectral descent \citep{bernstein2025modular},
can behave very differently from gradient flow
\citep{Dragutinovic2026ToUO,Rhee2026MuonLB}. In simple matrix-factorization problems,
gradient flow learns singular directions sequentially, whereas full-batch Muon learns them
in parallel. This is surprising: if sequential feature learning is an important source of
simplicity bias, why does an optimizer that removes this behavior work so well in modern
deep learning?

In this work, we show that the missing piece in this consideration is small-batch noise. The behavior of Muon changes qualitatively once the gradients become stochastic. While full-batch spectral descent learns different features in parallel, small-batch spectral descent can recover a clear ordering in which stronger features are learned before weaker ones.
By learning dominant singular directions first, these dynamics favor low-rank intermediate predictors, recovering a mechanism for simplicity bias that is absent from deterministic Muon.

Muon applies a matrix sign, or spectral normalization, to the gradient. In the deterministic setting, this removes much of the information contained in the gradient magnitude and causes different singular directions to evolve at the same rate. Our main insight is that small-batch noise smoothens this nonlinear sign operation. When the gradient is small relative to the noise scale, the expected normalized update becomes approximately proportional to the original gradient. As a result, the  gradient magnitude enters the dynamics and larger target features are learned faster. We schematically illustrate this in Figure \ref{figure : schematic introduction}.

\begin{figure}[ht]
    \centering
   \includegraphics[width = 0.99\linewidth]{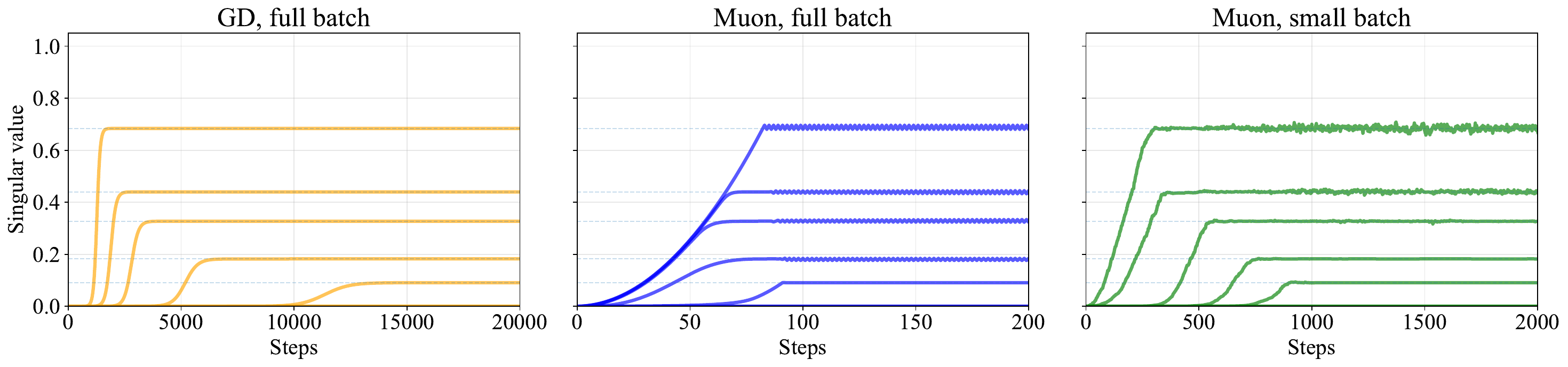}
    \caption{For low-rank matrix reconstruction with whitened data, gradient flow learns singular directions sequentially, while full-batch Muon (without momentum) learns them in parallel. Small-batch noise restores sequential feature learning, thereby giving a mechanism for simplicity bias in Muon.}
    \label{fig: main}
\end{figure}

We derive this mechanism explicitly for deep linear matrix factorization. For whitened data, we first obtain the singular-value dynamics of deterministic spectral flow (the flow proxy for full-batch Muon) and show that it produces parallel feature learning. We then introduce loss-dependent stochastic gradient noise and derive the corresponding expected spectral flow (i.e. the flow proxy for small-batch Muon). The resulting smoothed dynamics exhibit a gradient dependent growth rate, which leads to separation between singular modes and sequential feature learning. Increasing depth further amplifies this trend. The expected spectral flow corresponding to Muon retains this leading-order smoothing mechanism even for symmetric heavy-tailed noise and even with infinite variance, which could substantially change dynamics of SGD.
For the diagonal restriction of the dynamics, we further show that depth changes the implicit bias: under infinitesimal initialization, the induced regularization is of $L_{1+1/L}$-type at depth $L$, whereas gradient flow retains an $L_1$ bias. The corresponding mirror potential also remains convex at arbitrary depth and is related to the steepest mirror flow formulation by \cite{jacobs2026never}. This contrasts with gradient flow, whose corresponding potential loses convexity beyond depth two in the balanced limit.

Our theory implies that batch size can determine whether Muon exhibits the sequential feature learning that favors low-rank intermediate predictors. This is visible, for instance, in the low-rank reconstruction example in Figure~\ref{fig: main}: Full-batch Muon without momentum (also known as spectral descent) learns the relevant singular directions in parallel, whereas decreasing the batch size produces sequential dynamics. 
Beyond this analytically tractable setting, we investigate whether smaller batches also favor simpler learned representations and find lower effective ranks in both vision and language models.

\paragraph{Contributions.}
\begin{itemize}
   \item \textbf{Small-batch noise restores a mechanism for simplicity bias in Muon.}
We derive effective spectral-flow dynamics in Theorem \ref{theorem : spectral smooth flow} showing how stochastic smoothing makes the matrix-sign update approximately gradient proportional, recovering sequential feature learning from the parallel dynamics of full-batch Muon (Theorem \ref{theorem: spectral flow}), with depth amplifying the separation between feature-learning times.

\item \textbf{Heavy-tailed noise.}
We show that the same smoothing mechanism extends to heavy-tailed noise, even with infinite variance as in Example \ref{example : heavy tail cauchy}, preserving the recovery of sequential learning despite noise distributions that substantially alter SGD dynamics.

    \item \textbf{Implicit regularization of smoothed spectral dynamics.}
In Section \ref{section : implicit regularization}, for the diagonal restriction of the aligned factorization dynamics, we
characterize the implicit bias of the smoothed sign flow underlying
Muon in Theorem \ref{lemma : implicit bias two layer}. The resulting depth-dependent mirror potential remains convex at
arbitrary depth, unlike the corresponding potential for gradient flow.

\item \textbf{A link between batch size and learned complexity.}
    Experiments in Section \ref{section : empirical validation} show that smaller batches yield representations with lower effective rank in both vision and language models, consistent with the preference for simpler representations suggested by our theory.
\end{itemize}

\section{Related Work}
\label{sec:related-work}

\paragraph{Sequential feature learning.}
A recurring observation in neural-network optimization is that features are often acquired sequentially rather than in parallel. In deep linear and matrix-factorization models, gradient dynamics can exhibit mode-dependent learning times and preferential recovery of dominant spectral components
\citep{saxe2014exactsolutionsnonlineardynamics,NEURIPS2023_cad2fd66, Huh2021TheLS, Andriushchenko2022SGDWL, Gidel2019ImplicitRO, zhao2026gradient}.
Related sequential dynamics have been characterized through trajectories that move between increasingly complex saddle manifolds in diagonal linear networks \citep{pesme2023saddletosaddle}, and under structured assumptions for shallow ReLU networks \citep{NEURIPS2022_7eeb9af3}. Complementary work distinguishes several mechanisms through which feature learning can arise in graph neural networks \citep{mustafa2023are,mustafa2024gate,Mustafa2024DynamicRF} and in linear models \citep{Xu2024ThreeMO}. More broadly, incremental learning has been linked to the geometry induced by mirror flows \citep{Berthier2026IncrementalLI}. 
Most closely related to our setting, recent work proposes saddle-to-saddle dynamics as an explanation for simplicity bias across architectures \citep{zhang2026saddletosaddle}, while \citep{Dragutinovic2026ToUO, Rhee2026MuonLB} demonstrate that this behavior can disappear under full-batch Muon, where singular modes are learned substantially more in parallel. Our contribution addresses the resulting gap: We show that the absence of sequential feature learning is specifically a deterministic phenomenon, and that small-batch stochasticity can restore a strong ordering of feature-learning under Muon.

\paragraph{Muon, steepest descent, and optimization geometry.}
Muon can be viewed through steepest descent in a spectral-norm geometry, with matrix-valued gradients transformed by an approximate polar or matrix-sign operation \citep{bernstein2025modular}. This places it within a broader literature relating optimization geometry to implicit bias \citep{gunasekar2020characterizing}. Sign-based and non-Euclidean steepest-descent methods have been studied in homogeneous networks, transformers, and separable classification
\citep{Balles2020TheGO,tsilivis2025flavors,li2025on,fan2025implicit}, while mirror-flow analyses characterize how parameterization and geometry can favor different representations
\citep{pesme2024implicit,Jacobs2026ImplicitBO,jacobs2026never, woodworth2020kernel, vaškevičius2019implicitregularizationoptimalsparse, tarzanagh2024transformerssupportvectormachines, arora2019implicitregularizationdeepmatrix, liang2025implicit, Li2022ImplicitBO, azulay2021implicitbiasinitializationshape, chou2024induce, Chou2021MoreIL, Chou2023RobustIR, sun2022mirror, domin2024from, Liang2026ImplicitBO}. Related work also studies smoothed sign descent from a mirror-descent perspective \citep{Wang2024AMD}. These results primarily concern convergence, margin or implicit bias of the limiting predictor. Instead, our focus is on the transient dynamics that determine which features emerge first. Recent modifications and analyses of spectral optimization further reveal that additional modifications can be beneficial including spectral clipping and general Schatten norms \citep{Jiang2026EnhancingLT,Pethick2026WhenTU,Yang2026ConvergenceOS}, while random or inverted spectral transformations can remain surprisingly competitive with the standard Muon update \citep{Shumaylov2026MuonIN}.
This suggests that Muon’s empirical success may depend less on the precise deterministic spectral transformation itself than on how that nonlinear transformation interacts with stochastic gradients during training.

\paragraph{Stochasticity and nonlinear gradient normalization.}
Small-batch gradient noise is generally state dependent and coupled to both the data distribution and the current loss, rather than behaving as an independent additive perturbation \citep{wojtowytsch2021stochastic, MonzioCompagnoni2025UnbiasedAS}. Such stochasticity can change the solution selected by optimization, including the implicit bias of diagonal and rank-one linear networks
\citep{pesme2021implicit,lyu2023implicit}, and injected noise can itself induce regularization in overparameterized models \citep{pmlr-v206-orvieto23a}. Different joint limits of learning rate, noise scale, and training time can also yield qualitatively different effective dynamics \citep{Shalova2024SingularlimitAO}. Particularly relevant to normalized methods, high-dimensional analyses of signSGD quantify how the sign operation compresses gradient noise and modifies effective preconditioning \citep{Xiao2024ExactRC}, while recent Muon-specific results show that spectral normalization can improve convergence under heavy-tailed gradient noise \citep{Hubler2026FreeHL}.

\section{Preliminaries}\label{section : preliminaries}

\paragraph{Matrix factorization.}
Deep linear networks provide a tractable setting in which gradient-flow dynamics can be studied explicitly. Let
\(X \in \mathbb{R}^{d \times n}\) and
\(Y \in \mathbb{R}^{d \times n}\) denote the input and output data
matrices. For a two-layer linear network with parameters 
\(U,V \in \mathbb{R}^{d \times d}\), we consider the empirical loss
$
\mathcal{L}(U,V)
=
\frac{1}{2n}\|UVX-Y\|_{\mathrm F}^2.
$
We assume that,
\(XX^\top/n=I_d\) and write
\(C_{yx}=YX^\top/n=P\Lambda Q^\top\) for the singular value
decomposition of the input--output cross-covariance.

If the factors are initialized in the corresponding singular-vector
basis, the different singular modes remain decoupled under gradient
flow. We further assume balanced initialization $U_0^\top U_0 = V_0 V_0^\top$, meaning that the two
factors have equal magnitude along each mode and we assume that the solution is realizable. Let \(\theta_i\) denote
the \(i\)-th singular value of the end-to-end matrix \(UV\), and let
\(\lambda_i\) be the corresponding singular value of the cross-covariance matrix \(C_{yx}\).

\begin{theorem}[Section 1 in \citep{saxe2014exactsolutionsnonlineardynamics}]\label{theorem : gradient flow two layer}
\label{thm:matrix-factorization}
Under whitened data, singular-vector alignment, and balanced
initialization, each singular mode evolves independently. Up to a
constant rescaling of time, its gradient-flow dynamics are
\[
\frac{d\theta_i}{dt}
=
\theta_i\bigl(\lambda_i-\theta_i\bigr),
\qquad
\theta_{i,0}=\theta_{i,\mathrm{init}}, \text{ with solution } \theta_{i,t}
=
\frac{\lambda_i}{
1+
\left(
\lambda_i/\theta_{i,\mathrm{init}}-1
\right)e^{-\lambda_i t}
},
\]
when \(\lambda_i>0\) and \(\theta_{i,\mathrm{init}}>0\).
\end{theorem}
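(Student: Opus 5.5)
We reduce the matrix gradient flow to scalar ODEs in the singular basis, using the stated assumptions, and then integrate the resulting logistic equation.

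\textbf{Step 1: the gradients.} With $XX^\top/n=I_d$ the loss is, up to a constant independent of $(U,V)$,
\[
\mathcal{L}(U,V)=\tfrac12\|UV\|_F^2-\mathrm{tr}\bigl(C_{yx}^\top UV\bigr).
\]
Gradient flow therefore reads
\[
\dot U=(C_{yx}-UV)V^\top,\qquad \dot V=U^\top (C_{yx}-UV).
\]

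\textbf{Step 2: invariance of the aligned ansatz.} Write $U=P\,\mathrm{diag}(a)R^\top$ and $V=R\,\mathrm{diag}(b)Q^\top$, where $R$ is a fixed orthogonal matrix. Substituting, we get $UV=P\,\mathrm{diag}(ab)Q^\top$ and $C_{yx}-UV=P\,\mathrm{diag}(\lambda-ab)Q^\top$. Hence
\[
\dot U=P\,\mathrm{diag}\bigl((\lambda-ab)b\bigr)R^\top,\qquad \dot V=R\,\mathrm{diag}\bigl(a(\lambda-ab)\bigr)Q^\top.
\]
Both vector fields stay in the ansatz class. So the ODE restricted to the class, $\dot a_i=b_i(\lambda_i-a_ib_i)$ and $\dot b_i=a_i(\lambda_i-a_ib_i)$, produces a solution of the full flow. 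Uniqueness of solutions of the locally Lipschitz flow then shows that the solution starting from aligned initial data remains aligned. This is the decoupling claim.

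\textbf{Step 3: balancedness and the scalar ODE.} Along the flow $\frac{d}{dt}(a_i^2-b_i^2)=0$. This is the standard conservation law $\frac{d}{dt}(U^\top U-VV^\top)=0$, which can also be verified directly in matrix form. Balanced initialization therefore gives $a_i=b_i$ for all times, taking both positive since $\theta_{i,\mathrm{init}}>0$ and choosing signs consistently. With $\theta_i=a_ib_i=a_i^2$,
\[
\dot\theta_i=2a_i\dot a_i=2a_i^2(\lambda_i-a_i^2)=2\theta_i(\lambda_i-\theta_i).
\]
Rescaling time by the factor $2$, together with the $1/n$ normalization already absorbed, gives the stated dynamics.

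\textbf{Step 4: solving the logistic equation.} Separate variables with the partial fractions
\[
\frac{1}{\theta(\lambda-\theta)}=\frac{1}{\lambda}\Bigl(\frac{1}{\theta}+\frac{1}{\lambda-\theta}\Bigr).
\]
This gives $\log\frac{\theta}{\lambda-\theta}=\lambda t+\text{const}$, which rearranges to the stated closed form. Alternatively, substituting $u=1/\theta$ turns the equation into the linear ODE $\dot u=1-\lambda u$, with solution $u=1/\lambda+(1/\theta_{\mathrm{init}}-1/\lambda)e^{-\lambda t}$. Positivity of $\lambda_i$ and $\theta_{i,\mathrm{init}}$ keeps the denominator positive for all $t\ge0$, so the solution exists globally.

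\textbf{Main obstacle.} The only delicate point is Step 2: justifying that the aligned ansatz is invariant and that the shared inner basis $R$ can be taken time-independent. Balancedness is what makes this possible, because it forces $U$ and $V$ to share the same right and left singular vectors respectively. Everything after that reduction is routine calculation.
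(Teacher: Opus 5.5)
Your proposal is correct and follows essentially the same route the paper relies on: it cites Saxe et al.\ for this statement, and its appendix proof of the spectral-flow analogue uses the identical template of an invariant aligned ansatz, reduction to scalar modes, and explicit integration of the mode ODE. The differences are cosmetic: you allow an inner rotation $R$ and derive $a_i=b_i$ from the conserved quantity $a_i^2-b_i^2$, rather than building balancedness into the ansatz $U=PD$, $V=DQ^\top$; also, Step~2 goes through for arbitrary $a,b$, so balancedness is needed only in Step~3 and not for the invariance step, contrary to what your closing remark suggests.
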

The same description extends to an \(L\)-layer linear network
\(W_L\cdots W_1\). If the layers are aligned and balanced across depth,
then the end-to-end map singular value \(\theta_i\) satisfies
\[
\frac{d\theta_i}{dt} 
=
L\theta_i^{\,2-2/L}
\bigl(\lambda_i-\theta_i\bigr).
\]
Thus, depth changes the effective learning rate of each mode but does not
destroy the mode-wise decoupling under the stated assumptions.

\paragraph{Batch noise.}
When we train with small batches, the effective update is a stochastic gradient $g$ instead of the full gradient $\nabla \mathcal{L}$. In our dynamical model of Muon, we want to study the effect of such a noise on the expected dynamics. Concretely, this type of noise typically is assumed to have the following properties:
1) The noise is centered and its covariance is bounded by the loss $\mathcal{L}$ for mean squared error type of losses \citep{wojtowytsch2021stochastic}.
2) The gradient estimates $g$ are independent when conditioned on the current parameters $\theta$.
3) The noise is coupled across layers \citep{pesme2021implicit}.
4) The noise has low rank.
We consider a diagonal noise structure that is isotropic, so not of low rank. Concretely, we have the estimate $g_{UV}= \nabla_{UV} \mathcal{L}(UV) + \xi$, with $\xi \sim N(0, \sigma^2\mathcal{L}I)$, which satisfies the first $3$ assumptions.

\paragraph{Steepest descent.}
The Muon update, without momentum, is a steepest descent step with respect to the spectral norm \citep{bernstein2025modular}. The update itself is the matrix sign operator as given by Definition \ref{definition : matrix sign opperator}.

\begin{definition}[Matrix sign operator]\label{definition : matrix sign opperator}
    For a matrix $G \in \mathbb{R}^{d \times d}$ and its singular value decomposition (SVD) $(A, B, \Lambda)$ fulfilling $G = A \Lambda B^T$, the sign operator $\text{Sign} : \mathbb{R}^{d \times d} \rightarrow  \mathbb{R}^{d \times d}$ is defined as:
    $
        \text{Sign}(G) : = A \operatorname{diag}\!\left(
\mathbf{1}_{\{\Lambda_{ii}>0\}}
\right) B^T.
    $
\end{definition}

For a fixed learning rate $\eta > 0$, this gives the training update $U_{k+1} = U_k -  \eta \ \text{Sign}(\nabla_U \mathcal{L}(U_kV_k))$, $V_{k+1} = V_k -  \eta \ \text{Sign}(\nabla_V \mathcal{L}(U_kV_k))$. Our goal is to discover the resulting dynamics and to compare them to gradient flow ($\eta \rightarrow 0$). Note that under our noise assumption, stochasticity leaves gradient flow unchanged in expectation, since $\mathbb{E}[g_{UV}] = \nabla_{UV}\mathcal{L}$. 
Hence, any noise induced change in the learning dynamics arises specifically from Muon's nonlinear $\operatorname{Sign}$ transformation.

\section{Spectral Dynamics}\label{section : main result spectral dynamics}
We provide two simple solutions of the spectral dynamics underlying Muon, mirroring known results for gradient flow \cite{saxe2014exactsolutionsnonlineardynamics}. To start, we derive the simple full batch case. Next, we present our main result characterizing the expected dynamics under noisy gradient which leads to sequential feature learning. Finally, we extend our arguments also to heavy-tailed noise \citep{Simsekli2019ATA}.

\paragraph{Full-batch dynamics.}
We first isolate the deterministic effect of the spectral flow, which removes the gradient-magnitude information responsible for sequential feature learning. This flow is given by $d U_t = - \text{Sign} (\nabla_U \mathcal{L}(U_tV_t)) dt$, $d V_t = - \text{Sign} (\nabla_V \mathcal{L}(U_tV_t))dt$.
\begin{theorem}[Spectral flow]\label{theorem: spectral flow}
     For whitened data, the spectral gradient flow dynamics of a singular value $\theta$ of the matrix product $UV$ is:
    \begin{equation*}
        d \theta_t =  \sqrt{\theta_t} \ \text{Sign} \ (\lambda - \theta_t) dt, \qquad \theta_0 = \theta_{\text{init}}.
    \end{equation*}
This has a closed-form solution:
\begin{equation*}
\theta_t = 
\left( \sqrt{\lambda} + \operatorname{Sign}\!\left( \sqrt{\theta_{\mathrm{init}}}-\sqrt{\lambda} \right) \left( \left| \sqrt{\theta_{\mathrm{init}}}-\sqrt{\lambda} \right| -\frac{t}{2} \right)_+ \right)^2\end{equation*}
\end{theorem}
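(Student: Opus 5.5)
We work in the aligned, balanced setting of Theorem~\ref{theorem : gradient flow two layer}. Rotate coordinates by the singular bases of $C_{yx}=P\Lambda Q^\top$ and write $U=P\,\mathrm{diag}(u)\,R^\top$ and $V=R\,\mathrm{diag}(v)\,Q^\top$ for some orthogonal $R$. With whitened data the loss becomes
\[
\mathcal{L}=\tfrac12\|UV-C_{yx}\|_F^2+\text{const},
\]
so
\[
\nabla_U\mathcal{L}=(UV-C_{yx})V^\top=P\,\mathrm{diag}\bigl((u_iv_i-\lambda_i)v_i\bigr)R^\top,
\qquad
\nabla_V\mathcal{L}=R\,\mathrm{diag}\bigl(u_i(u_iv_i-\lambda_i)\bigr)Q^\top .
\]
Both gradients are diagonal in the same bases, so Definition~\ref{definition : matrix sign opperator} gives entrywise signs. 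Writing $\mathrm{Sign}$ for the scalar sign (zero at zero, which matches the indicator convention up to orientation of singular vectors), we get
\[
\mathrm{Sign}(\nabla_U\mathcal{L})=P\,\mathrm{diag}\bigl(\mathrm{Sign}((u_iv_i-\lambda_i)v_i)\bigr)R^\top .
\]
The first step is to check that this structure is invariant: the flow only changes $u,v$ along the fixed bases $P,R,Q$, so the modes decouple. The scalar dynamics are
\[
\dot u_i=\mathrm{Sign}(v_i)\,\mathrm{Sign}(\lambda_i-u_iv_i),\qquad \dot v_i=\mathrm{Sign}(u_i)\,\mathrm{Sign}(\lambda_i-u_iv_i).
\]

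Next we use balancedness. If $u_i=v_i>0$ initially, then $\dot u_i=\dot v_i$, so $u_i=v_i=\sqrt{\theta_i}$ for all time (with positivity handled below). Then
\[
\dot\theta=\tfrac{d}{dt}(u v)=\dot u\,v+u\,\dot v=2\sqrt{\theta}\,\mathrm{Sign}(\lambda-\theta).
\]
This equals the claimed ODE up to the constant factor $2$. The paper uses the same "up to rescaling of time" convention as Theorem~\ref{theorem : gradient flow two layer}. I expect the closed form's $t/2$ to fix the normalization, since the $\sqrt{\cdot}$ substitution gives $\tfrac{d}{dt}\sqrt\theta=\tfrac12\,\mathrm{Sign}(\lambda-\theta)$ exactly for the stated ODE $\dot\theta=\sqrt\theta\,\mathrm{Sign}(\lambda-\theta)$. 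I will therefore state the time convention explicitly so that it matches, for instance by absorbing the factor $2$ into the learning-rate normalization.

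The solution then follows by substituting $s=\sqrt{\theta}$, which gives $\dot s=\tfrac12\,\mathrm{Sign}(\lambda-s^2)=\tfrac12\,\mathrm{Sign}(\sqrt\lambda-s)$ for $s\ge0$. So $s$ moves at constant speed $1/2$ toward $\sqrt\lambda$ and reaches it at time $2|\sqrt{\theta_{\rm init}}-\sqrt\lambda|$. Afterwards $s\equiv\sqrt\lambda$, the Filippov/sliding solution at the discontinuity, which corresponds to $\mathrm{Sign}(0)=0$. Writing this as
\[
s_t=\sqrt\lambda+\mathrm{Sign}\bigl(\sqrt{\theta_{\rm init}}-\sqrt\lambda\bigr)\bigl(|\sqrt{\theta_{\rm init}}-\sqrt\lambda|-t/2\bigr)_+
\]
and squaring gives the claimed formula. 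Positivity of $s$ is automatic: when starting above $\sqrt\lambda$, $s$ only decreases to $\sqrt\lambda>0$, and when starting below, $s$ increases.

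The main obstacle is rigor at the nonsmooth points. First, the right-hand side is discontinuous at $\theta=\lambda$, so I would interpret solutions in the Filippov sense and argue that the sliding solution is $s\equiv\sqrt\lambda$, consistent with the zero convention for $\mathrm{Sign}$. Second, I must justify that the matrix sign of a diagonalizable-in-fixed-bases gradient stays in those bases even when some diagonal entries vanish. For the vanishing-entry case I would treat modes with $\lambda_i=\theta_i$ as frozen, which is again consistent with $\mathrm{Sign}(0)=0$.
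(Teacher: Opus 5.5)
Your proposal is correct and follows essentially the same route as the paper. Both arguments reduce the problem to decoupled diagonal modes in the singular bases of $C_{yx}$ under alignment and balance, obtain $\dot\theta = 2\sqrt{\theta}\,\mathrm{Sign}(\lambda-\theta)$, absorb the factor $2$ into the time parametrization, and solve via $r=\sqrt{\theta}$ moving at speed $1/2$ toward $\sqrt{\lambda}$. Your extra orthogonal factor $R$, the explicit $\mathrm{Sign}(v_i)$ factors and the Filippov remark are harmless refinements of the paper's version, which takes $R=I$ and implicitly assumes positive factors.
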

Proof. See Appendix \ref{section : expected flows}. $\square$

The full-batch result thus removes the mode-dependent rates underlying sequential learning as shown in Figure \ref{fig: main}. 

\paragraph{Recovering sequential feature learning with noise.}
We now show that stochastic smoothing restores the direct dependence of the gradient on singular values in expectation.

\begin{theorem}[Smoothened spectral flow]\label{theorem : spectral smooth flow}
Assume whitened data, singular-vector alignment, and balanced initialization. Let $\xi_\tau\sim N\left(0,\sigma^2\mathcal{L}_\tau I\right).$ Then the expected dynamics of the \(i\)-th singular value satisfy 
\[
\frac{d\theta_i}{d\tau}
=
\sqrt{\theta_i}\, \mathbb{E}_{N(0,\sigma^2 \mathcal{L_{\tau}})}\left[\text{Sign}(\lambda_i -\ \theta_i + \xi_{\tau})\right]
= \sqrt{\theta_i}\,
\operatorname{erf}\left(
\frac{\lambda_i-\theta_i}
{\sqrt{2\sigma^2\mathcal{L}_\tau}}
\right).
\]
For large \(\sigma\), under the time change $
t(\tau)
=
\sqrt{\frac{2}{\pi\sigma^2}}
\int_0^\tau \frac{ds}{\sqrt{\mathcal{L}_s}},
$
the leading-order dynamics are
\begin{equation}\label{equation : full batch muon}
\frac{d\theta_i}{dt}
=
\sqrt{\theta_i}\left(\lambda_i-\theta_i\right), \text{ with solution } \theta_t
=
\lambda
\tanh^2\left(
\frac{\sqrt{\lambda}}{2}t
+
\operatorname{artanh}
\sqrt{\frac{\theta_{\mathrm{init}}}{\lambda}}
\right),
\end{equation}
when \(0<\theta_{\mathrm{init}}<\lambda\). 
Moreover, the loss of the flow in (\ref{equation : full batch muon}) satisfies $\mathcal{L}_{\tau} \leq \left(\sqrt{\mathcal{L}_0} - \frac{C}{2} \tau \right)^2_+$ with $C = \min_i\sqrt{\theta_{i,0}} \sqrt{\frac{2}{\pi \sigma^2}} $. Therefore, $\mathcal{L_{\tau}} = 0$ for $\tau \geq 2\sqrt{\mathcal{L}_{0}} /C $, showing convergence in finite time.

\end{theorem}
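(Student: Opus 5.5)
The plan is to reuse the reduction from the proof of Theorem~\ref{theorem: spectral flow} and change only the step where the Sign operator acts on the gradient.

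\textbf{Step 1: decoupling into scalar equations.} Under whitening, alignment and balance, write $U = P\,\mathrm{diag}(u_i)R^\top$ and $V = R\,\mathrm{diag}(v_i)Q^\top$ with $u_i = v_i = \sqrt{\theta_i}$. Then
\[
\nabla_U\mathcal{L} = -P\,\mathrm{diag}\bigl((\lambda_i-\theta_i)v_i\bigr)R^\top,
\]
and $\nabla_V\mathcal{L}$ has the analogous diagonal form. The isotropic Gaussian $\xi$ is invariant under the orthogonal rotations $P^\top(\cdot)R$, so it can be expressed in the aligned basis. I will then argue, as the paper's model implicitly does, that only its diagonal components matter in expectation. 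Off-diagonal components are symmetric, so their contributions average out to first order. I would state this as the modelling assumption that the noise acts mode-wise, matching the ``diagonal noise structure'' in the preliminaries. The expected Sign then acts entrywise on the diagonal, giving
\[
\dot u_i = \mathbb{E}\bigl[\operatorname{Sign}\bigl((\lambda_i-\theta_i)v_i + \xi\bigr)\bigr].
\]
Since $v_i > 0$ does not change the sign, I absorb it into a rescaling of the noise level. This is the step where I expect the main obstacle: strictly, the expected matrix sign of a noisy diagonal matrix is not the diagonal of entrywise expected signs. I would justify it either by restricting to the diagonal (aligned) dynamics or by arguing that the expectation remains diagonal by sign-flip symmetry of the off-diagonal noise.

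\textbf{Step 2: the error function formula.} From $\theta_i = u_iv_i$ and balance $u_i = v_i$ we get
\[
\dot\theta_i = v_i\dot u_i + u_i\dot v_i = 2\sqrt{\theta_i}\,\mathbb{E}[\operatorname{Sign}(\cdot)],
\]
and the factor $2$ is absorbed into time, as in Theorem~\ref{theorem: spectral flow}. For scalar $\xi \sim N(0,s^2)$ with $s^2 = \sigma^2\mathcal{L}_\tau$,
\[
\mathbb{E}\,\operatorname{Sign}(a+\xi) = P(\xi > -a) - P(\xi < -a) = \operatorname{erf}\bigl(a/\sqrt{2s^2}\bigr).
\]

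\textbf{Step 3: leading order for large $\sigma$.} Using $\operatorname{erf}(x) = \tfrac{2}{\sqrt\pi}\,x + O(x^3)$, we obtain
\[
\frac{d\theta_i}{d\tau} \approx \sqrt{\tfrac{2}{\pi\sigma^2\mathcal{L}_\tau}}\,\sqrt{\theta_i}\,(\lambda_i-\theta_i).
\]
The time change $dt = \sqrt{2/(\pi\sigma^2)}\,\mathcal{L}_\tau^{-1/2}\,d\tau$ turns this into $\dot\theta = \sqrt\theta\,(\lambda-\theta)$. To solve it, substitute $\theta = \lambda w^2$. This gives $2\sqrt\lambda\,w\dot w\cdot\sqrt\lambda = \sqrt\lambda\, w\,\lambda(1-w^2)$, that is, $\dot w = \tfrac{\sqrt\lambda}{2}(1-w^2)$. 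Hence $w = \tanh\bigl(\tfrac{\sqrt\lambda}{2}t + \operatorname{artanh}w_0\bigr)$ with $w_0 = \sqrt{\theta_{\mathrm{init}}/\lambda} < 1$.

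\textbf{Step 4: finite-time convergence.} With whitening, $\mathcal{L} = \tfrac12\sum_i(\lambda_i-\theta_i)^2 + \text{const}$, and realizability makes the constant $0$. In $\tau$-time,
\[
\frac{d\mathcal{L}}{d\tau} = -\sum_i(\lambda_i-\theta_i)\,\dot\theta_i = -\sqrt{\tfrac{2}{\pi\sigma^2\mathcal{L}}}\,\sum_i\sqrt{\theta_i}\,(\lambda_i-\theta_i)^2.
\]
By the explicit solution, $\theta_i$ increases monotonically, so $\sqrt{\theta_i} \ge \sqrt{\theta_{i,0}}$. Therefore
\[
\frac{d\mathcal{L}}{d\tau} \le -C\,\frac{2\mathcal{L}}{\sqrt{\mathcal{L}}} = -2C\sqrt{\mathcal{L}}.
\]
This gives $\tfrac{d}{d\tau}\sqrt{\mathcal{L}} \le -C$, which is stronger than the claimed $-C/2$ and hence implies $\mathcal{L}_\tau \le (\sqrt{\mathcal{L}_0} - C\tau/2)_+^2$. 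The finite-time statement follows immediately.
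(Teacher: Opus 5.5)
\textbf{Gap in Steps 1--2: where the noise enters.} Your skeleton matches the paper's proof: mode-wise reduction, Gaussian smoothing to $\operatorname{erf}$, first-order expansion, time change, substitution, and a Gr\"onwall-type bound on $\sqrt{\mathcal{L}}$. Steps 3--4 are correct. Your rate $\frac{d}{d\tau}\sqrt{\mathcal{L}}\le -C$ is in fact sharper than the stated $-C/2$, because the paper's inequality discards the factor $2$ in $\sum_i(\lambda_i-\theta_i)^2=2\mathcal{L}$. The problem is that you add the isotropic noise $\xi\sim N(0,\sigma^2\mathcal{L}I)$ directly to $\nabla_U\mathcal{L}$. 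The $i$-th mode then sees $\operatorname{Sign}((\lambda_i-\theta_i)v_i+\xi)$, whose expectation is $\operatorname{erf}\bigl(v_i(\lambda_i-\theta_i)/\sqrt{2\sigma^2\mathcal{L}_\tau}\bigr)$. The factor $v_i=\sqrt{\theta_i}$ multiplies only the signal, not the noise. So \emph{absorbing it into a rescaling of the noise level} means replacing $\sigma^2\mathcal{L}_\tau$ by $\sigma^2\mathcal{L}_\tau/\theta_i$. That level depends on the mode and on time, and no global time change removes it. In Step 2 you then silently use $s^2=\sigma^2\mathcal{L}_\tau$ instead. If you carry your model through honestly, the leading-order drift is proportional to $\theta_i(\lambda_i-\theta_i)/\sqrt{\mathcal{L}_\tau}$. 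After the time change this is the gradient-flow mobility $\theta_i$ of Theorem~\ref{theorem : gradient flow two layer}, not the claimed $\sqrt{\theta_i}$. So this step does not just lose rigor; it contradicts the very distinction the theorem is about.

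\textbf{Repair, and one missing ingredient.} Use the paper's noise model. The noise sits on the shared end-to-end gradient, $g_{UV}=\nabla_{UV}\mathcal{L}+\xi$, coupled across layers, and the layer updates use $g_{UV}V^\top$ and $U^\top g_{UV}$. In the mode-wise model the $i$-th entry of the $U$-gradient is then $(\theta_i-\lambda_i+\xi_i)v_i$. Now $v_i>0$ scales signal and noise alike and drops out of the sign exactly, leaving noise level $\sigma^2\mathcal{L}_\tau$. The outer $\sqrt{\theta_i}$ then comes only from $\dot\theta_i=v_i\dot u_i+u_i\dot v_i$.

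Your caveat about the matrix sign is legitimate. For diagonal sign matrices $S$ we have $\operatorname{Sign}(SMS)=S\operatorname{Sign}(M)S$, and $SNS$ has the same law as $N$, so the expected sign stays diagonal. That does not show its diagonal equals the scalar $\operatorname{erf}$. The paper's proof also just asserts decoupling, so stating it as a modelling assumption is at the paper's level of rigor.

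The ingredient you should add justifies calling the expansion leading order along the whole trajectory, including as $\mathcal{L}_\tau\to0$ in Step 4. By realizability, $|\lambda_i-\theta_i|\le\sqrt{2\mathcal{L}_\tau}$, so the $\operatorname{erf}$ argument has modulus at most $1/\sigma$. Hence $\operatorname{erf}(x)=\frac{2}{\sqrt{\pi}}x\,(1+O(\sigma^{-2}))$ uniformly in time. This is exactly why the noise is scaled by the loss (Remark~\ref{remark : noise approximation}).
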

\textit{Proof sketch. 
Singular-vector alignment reduces the dynamics to scalar modes, and taking the expectation of each noisy sign update gives the error-function expression.
The second step follows from a first order Taylor approximation, which remains valid for all $t \geq 0$ (see Remark \ref{remark : noise approximation}). Then the ODE can be solved using a substitution. Finally, the loss bound follows from a Gr\"onwall argument. The full proof is given in Appendix \ref{section : expected flows}. $\square$
}

According to Theorem \ref{theorem : spectral smooth flow}, larger singular values are learned earlier. Still, the flow lacks the pronounced delays observed in Figure \ref{fig: main}, possibly because our isotropic noise model omits the low-rank structure of mini-batch gradients. Increasing depth strengthens this separation (Corollary \ref{corollary : deep spectral smooth}).

\begin{wrapfigure}{r}{0.5\textwidth} 
        \includegraphics[width=0.99\linewidth]{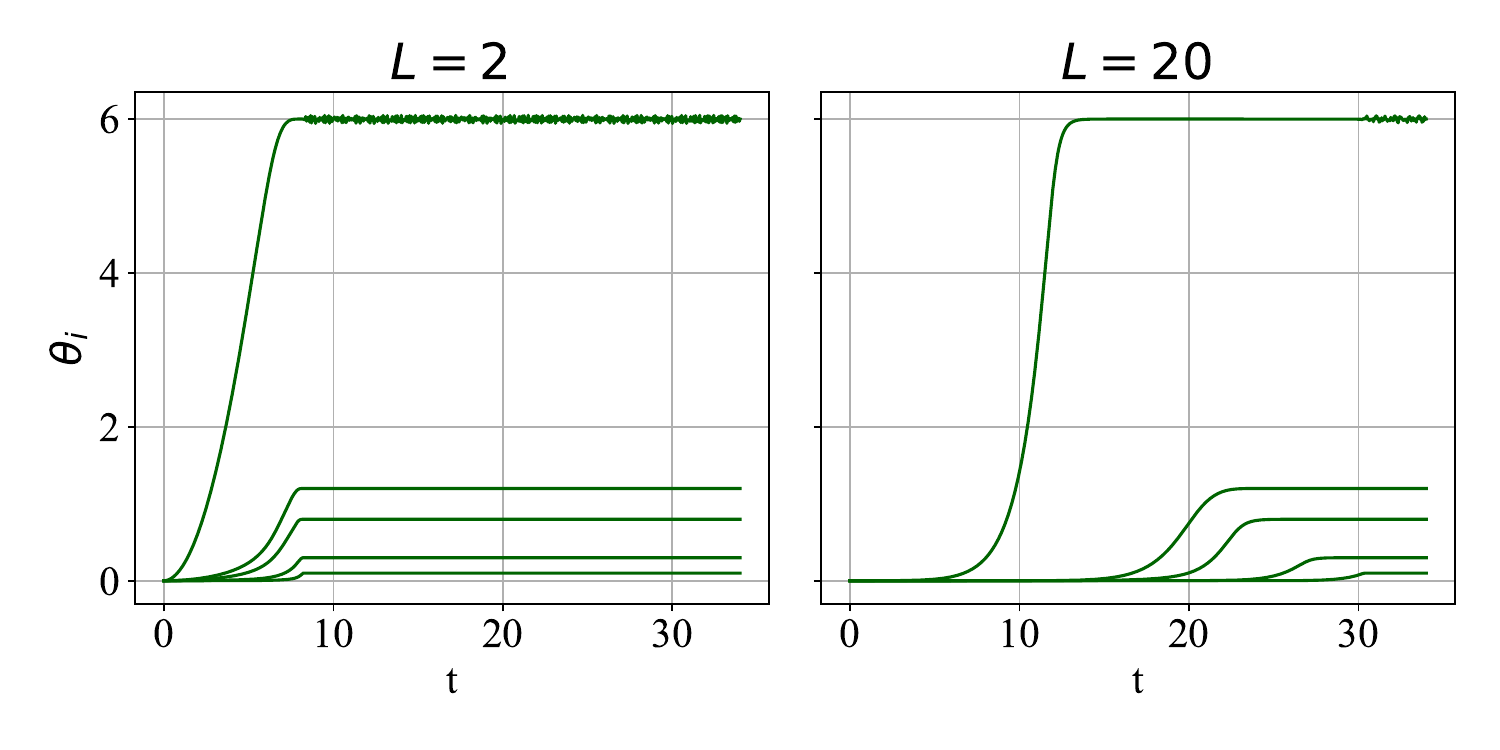}
        \caption{Simulation of the smoothened flows for depth $L = 2$ and $20$. For higher depth we observe sequential feature learning.}
        \label{fig:depth smoothened}
\end{wrapfigure}

\begin{corollary}[Smoothened deep spectral flow]\label{corollary : deep spectral smooth}
     Consider the same setting as Theorem \ref{theorem : spectral smooth flow} with the deep matrix product of $L$ layers. The expected smoothened spectral gradient flow dynamics after the same time rescaling are:
    \begin{equation*}
        d \theta_t =  \theta_t^{1-1/L} (\lambda - \theta_t) dt, \qquad \theta_0 = \theta_{\text{init}}.
    \end{equation*}
\end{corollary}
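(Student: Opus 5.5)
The plan follows the proof of Theorem \ref{theorem : spectral smooth flow}, with depth entering only through the factor multiplying the smoothed sign. I first reduce to scalar modes. Let the $L$ layers be $W_L,\dots,W_1$, aligned in the singular bases of $C_{yx}=P\Lambda Q^\top$ and balanced across depth: $W_L = P\,\mathrm{diag}(w)R_{L-1}^\top$, intermediate $W_\ell = R_\ell \mathrm{diag}(w) R_{\ell-1}^\top$, and $W_1 = R_1\mathrm{diag}(w)Q^\top$, with $R_\ell$ orthogonal. Each mode then has one scalar $w_i\ge 0$ per layer, and $\theta_i = w_i^L$.

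With whitened data the gradient with respect to layer $\ell$ is $W_{L}^\top\cdots W_{\ell+1}^\top (W_L\cdots W_1 - C_{yx}) W_{\ell-1}^\top\cdots W_1^\top$. In the aligned basis this is diagonal with entries $w_i^{L-1}(\theta_i-\lambda_i)$. So $\mathrm{Sign}$ of the noiseless gradient acts entrywise as $\mathrm{Sign}(\theta_i-\lambda_i)$. This is the deep analogue of the step behind Theorem \ref{theorem: spectral flow}.

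For the noisy case I take, as in the two-layer proof, the noise restricted to each aligned mode as a scalar $N(0,\sigma^2\mathcal{L}_\tau)$ perturbation of the residual. The expected update of every layer scalar is then
\[
\frac{dw_i}{d\tau} = \operatorname{erf}\left(\frac{\lambda_i-\theta_i}{\sqrt{2\sigma^2\mathcal{L}_\tau}}\right).
\]
This step carries a modeling choice: whether the noise is added before or after the factor $w_i^{L-1}$. Since $w_i^{L-1}>0$ does not change the sign, I adopt the same convention as Theorem \ref{theorem : spectral smooth flow}, where the noise is attached to $\lambda_i-\theta_i$. I expect this to be the main point of care, because it is exactly what must reproduce the two-layer exponent.

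Next I apply the chain rule. Each of the $L$ layers contributes equally, so
\[
\frac{d\theta_i}{d\tau} = L\, w_i^{L-1}\frac{dw_i}{d\tau} = L\,\theta_i^{1-1/L}\,\operatorname{erf}\left(\frac{\lambda_i-\theta_i}{\sqrt{2\sigma^2\mathcal{L}_\tau}}\right).
\]
For $L=2$ this gives $2\sqrt{\theta_i}\,\operatorname{erf}(\cdot)$, consistent with Theorem \ref{theorem : spectral smooth flow} up to the constant rescaling of time already absorbed there.

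For large $\sigma$, I use the first-order expansion $\operatorname{erf}(x)=\tfrac{2}{\sqrt\pi}x+O(x^3)$. Its validity is uniform in time by the same argument as Remark \ref{remark : noise approximation}: $|\lambda_i-\theta_i|$ is controlled by $\sqrt{\mathcal{L}_\tau}$, so the argument of $\operatorname{erf}$ is $O(1/\sigma)$. Finally I apply the same time change $t(\tau)=\sqrt{2/(\pi\sigma^2)}\int_0^\tau \mathcal{L}_s^{-1/2}ds$, absorbing the constant $L$ into the time rescaling. This yields $d\theta_t=\theta_t^{1-1/L}(\lambda-\theta_t)\,dt$.

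The only genuine obstacle is keeping the noise convention and the time rescaling consistent with the two-layer theorem. Everything else is a direct chain-rule computation.
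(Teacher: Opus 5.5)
Your proposal is correct and follows the route the paper intends; the paper gives no separate proof of the corollary and only notes that the depth-$L$ case is analogous to the two-layer one. Your steps match that argument: reduce to aligned scalar modes with $\theta_i=w_i^L$; note that with noise on the shared end-to-end gradient the positive factor $w_i^{L-1}$ drops out of the sign, so each layer scalar moves at rate $\operatorname{erf}\bigl((\lambda_i-\theta_i)/\sqrt{2\sigma^2\mathcal{L}_\tau}\bigr)$; apply the chain rule to get $L\theta_i^{1-1/L}\operatorname{erf}(\cdot)$; then linearize uniformly in time and use the same time change with the constant $L$ absorbed. You were right to single out the noise convention as the crucial point, since noise added after the factor $w_i^{L-1}$ would instead give the gradient-flow mobility $\theta_i^{2-2/L}$.
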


As $L\rightarrow\infty$, Corollary \ref{corollary : deep spectral smooth} approaches the two-layer gradient-flow dynamics of Theorem \ref{theorem : gradient flow two layer}. Thus, noise and depth together recover increasingly pronounced sequential feature learning, as illustrated in Figure \ref{fig:depth smoothened}. Moreover, near small initialization, the mobility $\theta^{1-1/L}$ is less suppressive than the gradient-flow mobility $\theta^{2-2/L}$, yielding faster initial growth for all $L\geq2$ up to a constant time rescaling.

\begin{remark}\label{remark : noise approximation}
    Note that our approximation of the expected flow crucially relies on the noise construction: The proportionality to $\mathcal{L}$ makes it possible to bound the $\text{erf}$ approximation at all time. If we had considered a fixed isotropic scale, i.e. $N(0, \sigma^2 I)$, this would have only been possible with significantly high noise whose variance would have dominated the dynamics.
\end{remark}

\paragraph{Heavy-tailed noise.}
The Gaussian assumption is not needed for the expected-flow approximation.
It is sufficient that the noise distribution is symmetric and sufficiently
regular near the origin. 
This highlights another advantage of Muon over SGD, whose
dynamics can change substantially under heavy-tailed gradient noise
\citep{Simsekli2019ATA,pmlr-v139-gurbuzbalaban21a}. Since spectral descent
applies a bounded sign transformation, the approximation remains valid
for noise even with infinite variance. 
This is particularly relevant for Muon in LLM training, where heavy-tailed gradient noise has been observed. Recent work establishes convergence advantages for Muon under such noise \citep{Hubler2026FreeHL, Yu2026SignBasedOA}.

\begin{lemma}\label{lemma:noise}
Let the noise $\xi_i$ have a symmetric distribution with cumulative distribution function
$F_{\xi_i}$ and a density $f_{\xi_i}$ that is twice continuously differentiable
in a neighborhood of the origin with $f_{\xi_i}(0) > 0$. Then, for $a_i=\lambda_i-\theta_i$, the expected update as in Theorem \ref{theorem : spectral smooth flow} is:
\begin{equation*}
    \mathbb{E}_{\xi_i}\!\left[\text{Sign}(a_i+\xi_i)\right]
    = 2F_{\xi_i}(a_i)-1
    = 2f_{\xi_i}(0)a_i
      + \frac{f_{\xi_i}''(0)}{3}a_i^3
      + o(a_i^3).
\end{equation*}
\end{lemma}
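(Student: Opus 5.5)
The proof has two parts: an exact computation of the expectation using symmetry, and then a Taylor expansion of $F_{\xi_i}$ around the origin.

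\textbf{Exact identity.} Since $\operatorname{Sign}$ is scalar here, it takes the values $\pm1$ (and $0$ only on the event $\{a_i+\xi_i=0\}$). This event has probability zero because $\xi_i$ has a density. Hence
\[
\mathbb{E}_{\xi_i}\!\left[\operatorname{Sign}(a_i+\xi_i)\right]=\mathbb{P}(\xi_i>-a_i)-\mathbb{P}(\xi_i<-a_i)=1-2F_{\xi_i}(-a_i).
\]
Symmetry of the distribution gives $F_{\xi_i}(-a)=1-F_{\xi_i}(a)$, and therefore the expectation equals $2F_{\xi_i}(a_i)-1$. No moment assumptions are used at any point. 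This is why the statement covers infinite-variance noise such as Cauchy: the sign is bounded, so its expectation always exists.

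\textbf{Expansion.} Define $h(a)=2F_{\xi_i}(a)-1$. Then $h(0)=0$ by symmetry, since $F_{\xi_i}(0)=1/2$. Moreover $h'(a)=2f_{\xi_i}(a)$, $h''(a)=2f_{\xi_i}'(a)$ and $h'''(a)=2f_{\xi_i}''(a)$. Because $f_{\xi_i}$ is $C^2$ near the origin, $h$ is $C^3$ there. Symmetry of the density means $f_{\xi_i}$ is even, so $f_{\xi_i}'$ is odd and $f_{\xi_i}'(0)=0$; equivalently, $h$ is an odd function and the even-order terms vanish. Taylor's theorem with Peano remainder at order three then gives
\[
h(a)=2f_{\xi_i}(0)a+\tfrac{2f_{\xi_i}''(0)}{6}a^3+o(a^3),
\]
which is exactly the claimed formula.

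\textbf{Main obstacle.} The only delicate point is justifying the Peano-form remainder, which needs $h\in C^3$ in a neighborhood of the origin. This follows from the fundamental theorem of calculus: $F_{\xi_i}$ is the integral of the $C^2$ density $f_{\xi_i}$, so $F_{\xi_i}$ is $C^3$ near $0$.

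The assumption $f_{\xi_i}(0)>0$ is not needed for the expansion itself. It only ensures that the leading coefficient is nonzero, so that the expected update is approximately proportional to $a_i=\lambda_i-\theta_i$. This is the analogue of the $\sqrt{2/(\pi\sigma^2\mathcal{L}_\tau)}$ factor in Theorem~\ref{theorem : spectral smooth flow}.
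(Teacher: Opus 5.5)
Your proposal is correct and follows the same route as the paper. Symmetry gives $1-2F_{\xi_i}(-a_i)=2F_{\xi_i}(a_i)-1$, and a third-order Taylor expansion at the origin, using $f_{\xi_i}'(0)=0$, yields the stated coefficients; you also make explicit the details the paper leaves implicit, namely that $\{a_i+\xi_i=0\}$ is a null event and that $F_{\xi_i}\in C^3$ near $0$ justifies the Peano remainder.
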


\begin{proof}
By symmetry, $\mathbb{E}_{\xi_i}[\text{Sign}(a_i+\xi_i)]
=1-2F_{\xi_i}(-a_i)
=2F_{\xi_i}(a_i)-1.$ This follows from a Taylor expansion  and the fact that symmetry
implies $f_{\xi_i}'(0)=0$.
\end{proof}

\begin{remark}
More explicitly, let $\xi_i=\sigma Z_i$, where $Z_i$ has a symmetric density
$f_Z$. Then
\begin{equation*}
    \mathbb{E}\!\left[\text{Sign}(a_i+\sigma Z_i)\right]
    =
    \frac{2f_Z(0)}{\sigma}a_i
    +\frac{f_Z''(0)}{3\sigma^3}a_i^3
    +o(\sigma^{-3}).
\end{equation*}
Thus, as $\sigma\to\infty$, the rescaled expected dynamics recover the same gradient-proportional flow as in the Gaussian case.
\end{remark}

\begin{example}\label{example : heavy tail cauchy}
Consider zero centered heavy-tailed Cauchy noise with scale parameter $\gamma>0$. Then
\begin{equation*}
    \mathbb{E}_{\xi_i}\!\left[\text{Sign}(a_i+\xi_i)\right]
    =
    \frac{2}{\pi}\arctan\!\left(\frac{a_i}{\gamma}\right)
    =
    \frac{2}{\pi}\frac{a_i}{\gamma}
    -\frac{2}{3\pi}\left(\frac{a_i}{\gamma}\right)^3
    +O\!\left(\frac{a_i^5}{\gamma^5}\right).
\end{equation*} 
Thus, even infinite-variance Cauchy noise preserves the smoothing mechanism underlying sequential feature learning.
\end{example}

Since the spectral sign update is bounded even under infinite-variance noise, stochastic fluctuations vanish as $\eta\to0$ at fixed $\sigma$, yielding a deterministic expected flow. This contrasts with the L\'{e}vy-driven dynamics that can arise for SGD under heavy-tailed gradient noise \citep{Nguyen2019FirstET}. When $\sigma\to\infty$ jointly with $\eta\to0$, imposing $\sigma\eta\to0$ ensures that fluctuations also vanish on the accelerated time scale.

\section{Implicit regularization}\label{section : implicit regularization}

The previous section characterized how stochastic smoothing changes the transient dynamics of spectral descent. We now characterize how the same smoothing mechanism determines the interpolator selected at convergence.
For diagonal matrices, the matrix-sign operator reduces to the coordinate-wise sign. Thus, smoothened sign descent on a diagonal linear network is a direct restriction of the spectral dynamics underlying Muon.
In this setting, the induced dynamics can be written as a mirror flow (see Appendix \ref{appendix : preliminary implicit bias} for details). This allows us to characterize the implicit regularization that guides model selection. 

Before studying the more general depth $L$ case, for simplicity, we first consider the two-layer parameterization $\theta=u\odot v$, where $u,v\in\mathbb{R}^d$ and $\odot$ denotes pointwise multiplication. Given a dataset $\{(x_i,y_i)\}_{i=1}^n\subset\mathbb{R}^d\times\mathbb{R}$, we consider $ \mathcal{L}(\theta) = \frac{1}{2n}\sum_{i=1}^n \bigl(\langle x_i,u\odot v\rangle-y_i\bigr)^2.$ Unlike the whitened matrix factorization setting of Section \ref{section : main result spectral dynamics}, we do not require any assumption on the data covariance.
Under the same noise assumptions as in the previous section, the smoothened sign dynamics of $\theta_t=u_t\odot v_t$ can (up to a positive time rescaling) be written as a mirror flow. The derivation, which relies on the preserved balance $|u_t|-|v_t|=|u_0|-|v_0|$ together with the construction of the corresponding mirror potential, is presented in Appendix~\ref{appendix : implicit bias}.

\begin{theorem}\label{lemma : implicit bias two layer}
    The iterates $\theta_t = u_t \odot v_t$ converge and the loss decreases with rate $\mathcal{O}(1/t^2)$ to a solution of the following objective:
    \begin{equation*}
        \min_{\theta \in \mathbb{R}^d} R(\theta, \lambda_0) \qquad \text{ such that } \qquad X \theta = y
    \end{equation*}
    where $
R(\theta,\lambda_0)
=
\sum_{j=1}^d
\left[
\frac{1}{12}\left(4|\theta_j|+\lambda_0^2\right)^{3/2}
-\frac{|\lambda_0|}{2}|\theta_j|
-\frac{\theta_j\text{Sign}(\theta_{j,0})}{2}
\left(
\sqrt{4|\theta_{j,0}|+\lambda_0^2}-|\lambda_0|
\right)
\right] 
$
is a Legendre (Definition \ref{definition : Legendre function}) and Bregman (Definition \ref{definition : Bregman function appendix}) potential with respect to $\theta \in \mathbb{R}^d$ and $\lambda_0 : = |u_0| -|v_0|$. Moreover, if $\mathcal{L}$ satisfies with $\Lambda > 0$ the PL-inequality (Definition \ref{definition : PL inequality}), then for $t \geq T$ and $T = \sqrt{ \frac{\pi}{2}\mathcal{L}(\theta_0)} /\left( |\lambda_0| \Lambda\right)$ convergence is guaranteed. 
\end{theorem}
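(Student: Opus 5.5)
The plan is to reduce the smoothed sign dynamics on $u,v$ to a coordinatewise mirror flow in $\theta$, then apply the standard mirror-flow argument for implicit bias and a PL argument for the rate.

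\textbf{Step 1 (effective dynamics).} I will use the noise model of Section~\ref{section : preliminaries}, with noise coupled across layers. The stochastic gradients are then $g_u=v\odot(\nabla_\theta\mathcal{L}+\xi)$ and $g_v=u\odot(\nabla_\theta\mathcal{L}+\xi)$, where $\xi\sim N(0,\sigma^2\mathcal{L}I)$. In the diagonal case the matrix sign is coordinatewise, so
\[
\text{Sign}(v_j(\partial_j\mathcal{L}+\xi_j))=\text{Sign}(v_j)\,\text{Sign}(\partial_j\mathcal{L}+\xi_j).
\]
Taking expectations and linearizing the error function exactly as in Theorem~\ref{theorem : spectral smooth flow} (via Lemma~\ref{lemma:noise}) gives the rescaled flow
\[
\dot u_j=-\text{Sign}(v_j)\,\partial_j\mathcal{L},\qquad \dot v_j=-\text{Sign}(u_j)\,\partial_j\mathcal{L},
\]
in the time $t(\tau)=\sqrt{2/(\pi\sigma^2)}\int_0^\tau \mathcal{L}_s^{-1/2}\,ds$. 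From this, $\tfrac{d}{dt}|u_j|=\tfrac{d}{dt}|v_j|=-\text{Sign}(u_jv_j)\,\partial_j\mathcal{L}$, so the balance $|u_j|-|v_j|=\lambda_{0,j}$ is conserved. This is the step where the coupled noise assumption is essential: with independent noise per layer the sign would not factor, and the conservation law would fail.

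\textbf{Step 2 (mirror flow).} The product rule gives $\dot\theta_j=-(|u_j|+|v_j|)\,\partial_j\mathcal{L}$. Combining $|u_j||v_j|=|\theta_j|$ with the conserved difference yields $|u_j|+|v_j|=\sqrt{4|\theta_j|+\lambda_0^2}$. Hence
\[
\dot\theta=-\operatorname{diag}\bigl(\sqrt{4|\theta|+\lambda_0^2}\bigr)\,\nabla\mathcal{L}(\theta),
\]
which equals $\tfrac{d}{dt}\nabla R(\theta_t)=-\nabla\mathcal{L}(\theta_t)$ provided $R_j''=(4|\theta_j|+\lambda_0^2)^{-1/2}$. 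I will verify this for the stated $R$:
\begin{itemize}
\item The first derivative is $R_j'=\tfrac12\text{Sign}(\theta_j)\bigl(\sqrt{4|\theta_j|+\lambda_0^2}-|\lambda_0|\bigr)-c_j$, where $c_j$ is the initialization constant. This is continuous at $0$, so $R$ is $C^1$ and strictly convex.
\item The linear term is chosen so that $\nabla R(\theta_0)=0$.
\item $R$ grows like $|\theta|^{3/2}$, hence is coercive and Legendre/Bregman on $\mathbb{R}^d$ (Definitions~\ref{definition : Legendre function} and \ref{definition : Bregman function appendix}).
\end{itemize}

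\textbf{Step 3 (implicit bias).} Integrating the mirror flow gives $\nabla R(\theta_t)=-X^\top\int_0^t r_s\,ds\in\operatorname{range}(X^\top)$. For convex $\mathcal{L}$ the Bregman divergence $D_R(\theta^*,\theta_t)$ to any interpolator $\theta^*$ is nonincreasing. Two consequences follow:
\begin{itemize}
\item The iterates stay bounded, and the standard mirror descent estimate gives $\mathcal{L}(\theta_t)\le D_R(\theta^*,\theta_0)/t$ in rescaled time.
\item Any limit point interpolates and satisfies the KKT condition $\nabla R(\theta_\infty)\in\operatorname{range}(X^\top)$. By strict convexity this identifies $\theta_\infty$ as the unique minimizer of $R$ on $\{X\theta=y\}$, and the Bregman monotonicity upgrades subsequential convergence to convergence.
\end{itemize}
The $\mathcal{O}(1/t^2)$ rate I expect to come from converting back through the time change, using $\mathcal{L}\to0$. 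I anticipate that this conversion between the two clocks is the fiddliest part; to obtain the square I would first try the self-similar bound $d\sqrt{\mathcal{L}}\le -\text{const}\cdot d\tau$.

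\textbf{Step 4 (PL finite time).} In the original time,
\[
\frac{d\mathcal{L}}{d\tau}=-\sqrt{\tfrac{2}{\pi\sigma^2}}\,\mathcal{L}^{-1/2}\,\nabla\mathcal{L}^\top\operatorname{diag}\bigl(\sqrt{4|\theta|+\lambda_0^2}\bigr)\nabla\mathcal{L}.
\]
The preconditioner is bounded below by $|\lambda_0|$, and PL gives $\|\nabla\mathcal{L}\|^2\ge\Lambda\mathcal{L}$. Together these yield $\tfrac{d}{d\tau}\sqrt{\mathcal{L}}\le -\tfrac12\sqrt{2/(\pi\sigma^2)}\,|\lambda_0|\Lambda$, and Gr\"onwall, as at the end of Theorem~\ref{theorem : spectral smooth flow}, gives the vanishing time $T$ up to the absorbed $\sigma$ constants.

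The main obstacle is the degenerate set $\theta_j=0$, where $u_j$ or $v_j$ crosses zero and the Sign factors are discontinuous. I will handle it by requiring $\lambda_{0,j}\neq 0$: then one factor never vanishes, the flow is well defined, and $\nabla R$ is a homeomorphism, so the mirror flow is well posed.
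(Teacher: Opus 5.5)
Apart from the rate, your argument tracks the paper's proof closely. The shared steps are the conserved balance $|u_j|-|v_j|$, the mobility $|u_j|+|v_j|=\sqrt{4|\theta_j|+\lambda_0^2}$, the identity $R_j''=(4|\theta_j|+\lambda_0^2)^{-1/2}$ with $\nabla R(\theta_0)=0$, the KKT condition from integrating the mirror flow, and PL plus Gr\"onwall. Your direct Bregman-monotonicity/KKT/strict-convexity argument is a valid self-contained substitute for the paper's appeal to the convergence theorems of Alvarez et al.\ and Li et al.

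The genuine gap is the unconditional $\mathcal{O}(1/t^2)$ rate. The bound you propose to try, $d\sqrt{\mathcal{L}}\le-\mathrm{const}\cdot d\tau$ in the original clock, is exactly your Step 4 inequality. With $c=\sqrt{2/(\pi\sigma^2)}$ and $M=\operatorname{diag}(\sqrt{4|\theta|+\lambda_0^2})$ one has $\tfrac{d}{d\tau}\sqrt{\mathcal{L}}=-\tfrac{c}{2}\,\nabla\mathcal{L}^\top M\nabla\mathcal{L}/\mathcal{L}$. A constant right-hand side therefore needs a PL-type bound $\|\nabla\mathcal{L}\|^2\gtrsim\mathcal{L}$, which the theorem only grants in its ``moreover'' clause. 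As written, the first rate claim is not established.

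The missing idea is to compare the two clocks using only the mirror-flow rate you already derived. In your notation, $\tau$ is the original clock (the statement's $t$) and $t$ is the mirror clock. With $D=D_R(\theta^*,\theta_0)$ you have $\mathcal{L}\le D/t$, and the clock obeys $dt/d\tau=c\,\mathcal{L}^{-1/2}\ge c\sqrt{t/D}$. Hence $\tfrac{d}{d\tau}\sqrt{t}\ge c/(2\sqrt{D})$, so $t\ge c^2\tau^2/(4D)$ and $\mathcal{L}\le D/t\le 4D^2/(c^2\tau^2)$. This is the paper's bound $B/t^2$ with $B=2\pi D_R^2$ (for $\sigma=1$). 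It needs no PL, and its constant depends only on $D_R$, not on $|\lambda_0|$ or $\Lambda$.

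Alternatively, you could rescue your route by noting that interpolable least squares always satisfies PL with $\Lambda=\lambda^{+}_{\min}(XX^\top)/n$, where $\lambda^{+}_{\min}$ is the smallest nonzero eigenvalue. Step 4 then gives finite-time extinction and a fortiori the rate, but with a constant that blows up as $\lambda_0\to0$.

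There is also a small slip in Step 4. The paper's PL definition gives $\|\nabla\mathcal{L}\|^2\ge2\Lambda\mathcal{L}$, not $\Lambda\mathcal{L}$. With your weaker constant the extinction time comes out as $2T$ instead of the stated $T$.
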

\textit{
Proof sketch. The balance $\lambda_t := |u_t| - |v_t| = \lambda_0$ holds almost everywhere. Thus, the dynamics of $\theta$ can be written as a mirror flow with a different time clock, characterizing the optimization problem. 
The convergence follows from a generalized Gronwall inequality (see Appendix \ref{appendix : implicit bias}).
}

The resulting potential is the diagonal counterpart of the square-root mobility obtained for the smoothened spectral dynamics in Section \ref{section : main result spectral dynamics}. Therefore, the same stochastic smoothing that restores mode-dependent learning rates also induces a non-Euclidean implicit regularization.

\begin{remark}\label{remark : two layer implicit bias}
    If $\theta_0 = 0$, then as $\lambda_0 \rightarrow 0$ we have $ \lim_{\lambda_0 \rightarrow 0}R(\theta, \lambda_0) = \frac{2}{3} \|\theta \|_{L_{3/2}}^{3/2}$. 
\end{remark}
As a consequence of Remark \ref{remark : two layer implicit bias}, small balanced initialization yields an $L_{3/2}$-type implicit regularization.
Thus, although stochastic smoothing restores dependence on the end-to-end loss gradient, the resulting dynamics induce a different mirror geometry from gradient flow. Restoring gradient-magnitude dependence therefore does not recover the typical $L_1$ bias of gradient flow \citep{woodworth2020kernel}.
Next, we extend the result to a depth-$L$ diagonal linear network $ \theta=\Pi_{j=1}^L u_j$. As in the two-layer case, the smoothened sign dynamics preserve pairwise balances between the absolute values of the factors. These balances allow all layerwise magnitudes to be expressed in terms of $\theta$ and $\lambda_0$, producing the deep mirror geometry characterized in \cite{jacobs2026never}. For the next result, we assume that the initialization satisfies the corresponding $\lambda_0$-balance across layers.

\begin{corollary}\label{lemma : deep mirror sign implicit}
    The iterates $\theta_t = \Pi_{j =1}^L u_{j,t}$ with $L \geq 2$, converge and the loss decreases with rate $\mathcal{O}(1/t^2)$ to a solution of the following objective:
    \begin{equation*}
        \min_{\theta \in \mathbb{R}^d} R(\theta, \lambda_0) \qquad \text{ such that } \qquad X \theta = y
    \end{equation*}
    where $R(\theta, \lambda_0)$ satisfies the following properties: 1) $R : \mathbb{R}^d \rightarrow \mathbb{R}$ is a Legendre and Bregman potential with respect to $\theta \in \mathbb{R}^d$, 2) if $\theta_0 = 0$, then there is a constant $c_L>0$ such that  $\lim_{\lambda_0 \rightarrow 0} R(\theta, 
    \lambda_0) =c_L \|\theta \|_{L_{1+\frac{1}{L}}}^{1+ \frac{1}{L}}$ . Moreover, if $\mathcal{L}(\theta)$ satisfies with $\Lambda > 0$ the PL-inequality (Definition \ref{definition : PL inequality}), then for $t \geq T$ and $T = \sqrt{ \frac{\pi}{2}\mathcal{L}(\theta_0)} /\left( |\lambda_0|^{(L-1)} \Lambda\right)$ convergence is guaranteed. 
\end{corollary}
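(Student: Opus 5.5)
The plan is to follow the two-layer argument of Theorem \ref{lemma : implicit bias two layer} layer by layer. First I write the smoothened sign dynamics for the depth-$L$ diagonal network. The stochastic gradient with respect to $u_j$ is $\nabla_\theta \mathcal{L}\odot \prod_{k\neq j}u_k + \xi$. Taking the expectation of the coordinatewise sign as in Lemma \ref{lemma:noise}, and applying the same large-$\sigma$ time change as in Theorem \ref{theorem : spectral smooth flow}, gives to leading order
\[
\dot u_{j} = -\operatorname{Sign}\Bigl(\prod_{k\neq j}u_k\Bigr)\odot\nabla_\theta\mathcal{L}.
\]
Consequently $\frac{d}{dt}|u_j| = -\operatorname{Sign}(\theta)\odot\nabla_\theta\mathcal{L}$ almost everywhere. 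This derivative is the same for every $j$, so all pairwise differences $|u_j|-|u_k|$ are conserved. Under the assumed $\lambda_0$-balance I parameterize the magnitudes as $|u_j| = m + c_j\lambda_0$ with fixed offsets $c_j$, for example $c_j = j-1$. Then $|\theta| = \prod_j (m+c_j\lambda_0) =: P(m)$.

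Next I express the flow of $\theta$ as a mirror flow. Differentiating the product gives
\[
\dot\theta = -\Bigl(\sum_j \prod_{k\neq j}|u_k|\Bigr)\nabla_\theta \mathcal{L} = -P'(m)\,\nabla_\theta\mathcal{L},
\]
applied coordinatewise, where $m = P^{-1}(|\theta|)$ is a well-defined increasing function of $|\theta|$ on the admissible branch. So $\dot\theta = -\mu(\theta)\nabla\mathcal{L}$ with mobility $\mu(\theta_j) = P'(P^{-1}(|\theta_j|))$, and this has the mirror-flow form $\frac{d}{dt}\nabla R(\theta) = -\nabla\mathcal{L}$ with $R''(\theta_j) = 1/\mu(\theta_j)$. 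I define $R$ by integrating twice, subtracting the linear term $\theta_j R'(\theta_{j,0})$ so that the initialization is the minimizer of the Bregman divergence.

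Then I check the Legendre and Bregman properties of Definitions \ref{definition : Legendre function} and \ref{definition : Bregman function appendix}. Strict convexity holds because $\mu>0$ away from $\theta=0$. At the origin, $\mu(0)\ge |\lambda_0|^{L-1}\prod_{j\ge2}c_j>0$ when $\lambda_0\neq 0$, so $R$ is $C^2$ and strictly convex on all of $\mathbb{R}$. Superlinear growth follows from $\mu(\theta)\sim L|\theta|^{1-1/L}$, which gives $R\sim |\theta|^{1+1/L}$. With these properties, the standard mirror-flow theory summarized in Appendix \ref{appendix : preliminary implicit bias} gives convergence to $\arg\min R$ on $\{X\theta=y\}$, because $\nabla R(\theta_t)-\nabla R(\theta_0)$ stays in the row space of $X$.

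For the limit in property 2), with $\theta_0=0$ and $\lambda_0\to0$, the mobility tends to $L|\theta|^{1-1/L}$. Integrating twice gives $R\to c_L|\theta|^{1+1/L}$ with $c_L = 1/(L+1)$. I would justify this by dominated convergence on $R''$ together with the vanishing linear term; for $L=2$ this reproduces Remark \ref{remark : two layer implicit bias}.

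For the rate, I use the Gr\"onwall argument of Theorem \ref{theorem : spectral smooth flow}. Along the undone time change, $\frac{d}{d\tau}\mathcal{L} = -\sqrt{2/(\pi\sigma^2)}\,\mathcal{L}^{-1/2}\,\langle\nabla\mathcal{L},\mu\nabla\mathcal{L}\rangle$. Combining $\mu\ge|\lambda_0|^{L-1}$ (up to a constant) with the PL inequality yields $\frac{d}{d\tau}\sqrt{\mathcal{L}}\le -C$. This gives $\mathcal{L}_\tau\le(\sqrt{\mathcal{L}_0}-C\tau)_+^2$, which is the $\mathcal{O}(1/t^2)$-type decay and vanishes at the stated $T$.

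I expect the main obstacle to be the branch and sign issues: $P^{-1}$ must be well defined, and $R$ must stay $C^1$ across $\theta_j=0$, where $\operatorname{Sign}(\theta)$ flips and the factor $m$ can hit $0$. My first attempt is to follow the two-layer appendix and order the layers so that exactly one factor can vanish.
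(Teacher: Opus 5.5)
Your skeleton matches the paper's proof: conserved pairwise balances, the separable mobility $\sum_j\prod_{k\neq j}|u_k|$ written in terms of $|\theta|$, mirror-flow convergence with the row-space/KKT argument, and a Gr\"onwall bound from a lower bound on the mobility. Where the paper imports the $q=1$ metric, the $\lambda_0\to0$ limit and the coercivity bound from \citet{jacobs2026never}, you verify them by hand.

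However, your first step is wrong as stated. You put independent noise on each layer gradient, $g_{u_j}=\nabla_\theta\mathcal{L}\odot\prod_{k\neq j}u_k+\xi$. Then the linearization $\mathbb{E}[\operatorname{Sign}(a+\xi)]\approx 2f(0)\,a$ gives $\dot u_j\propto-\nabla_\theta\mathcal{L}\odot\prod_{k\neq j}u_k$. That is time-rescaled gradient flow, with $\theta$-mobility $\sum_j\prod_{k\neq j}u_k^2$ and hence the gradient-flow geometry and its $L_1$-type bias. It is not the sign-weighted ODE you go on to use. The paper's dynamics require the noise to be injected into the shared gradient, $g_{u_j}=\bigl(\nabla_\theta\mathcal{L}+\xi\bigr)\odot\prod_{k\neq j}u_k$, i.e.\ noise coupled across layers. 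Then $\operatorname{Sign}(g_{u_j})=\operatorname{Sign}\bigl(\prod_{k\neq j}u_k\bigr)\odot\operatorname{Sign}(\nabla_\theta\mathcal{L}+\xi)$ holds exactly before the expectation is taken. This factorization is what strips the magnitude $|\prod_{k\neq j}u_k|$ and produces the $L_{1+1/L}$ potential.

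Second, your rate argument only covers the PL case, while the statement asserts $\mathcal{O}(1/t^2)$ without PL. The paper starts from the mirror-flow bound in mirror time, $\mathcal{L}(\theta_\tau)\le D_R(\theta^*,\theta_0)/\tau$. This follows from $\tfrac{d}{d\tau}D_R(\theta^*,\theta_\tau)\le-\mathcal{L}(\theta_\tau)$ and monotonicity of the loss. Combined with $d\tau/dt=\sqrt{2/\pi}\,\mathcal{L}^{-1/2}\ge\sqrt{2/\pi}\sqrt{\tau/D_R}$, it yields $\tau\ge t^2/(2\pi D_R)$ and so $\mathcal{L}(\theta_t)\le 2\pi D_R^2/t^2$. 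Alternatively, you could note that realizable least squares satisfies PL automatically, with $\Lambda$ the smallest nonzero eigenvalue of $X^\top X/n$; your finite-time bound then gives the rate a fortiori, but you have to say so.

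Your constant is also off. Integrating $R''=\tfrac1L|\theta|^{-1+1/L}$ gives $R'=\operatorname{Sign}(\theta)|\theta|^{1/L}$ and $R=\tfrac{L}{L+1}|\theta|^{1+1/L}$. So $c_L=L/(L+1)$, which is the $2/3$ of Remark~\ref{remark : two layer implicit bias} at $L=2$; your $1/(L+1)$ would give $1/3$.

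Finally, with your offsets $c_j=j-1$ the coercivity constant is $(L-1)!\,|\lambda_0|^{L-1}$. The stated $T$ corresponds to the paper's balance $|u_j|-|u_1|=\lambda_0$ for all $j\ge2$. Your bound is only better, so the claim still follows.
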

Proof. See Appendix \ref{appendix : implicit bias}.

\begin{wraptable}{r}{0.5\textwidth}
\caption{Implicit regularization objective $R$ for small init $\lambda_0>0$, depending on depth $L$.}
\centering
\begin{tabular}{c|c|c}
\hline
 & $L =2$ & $L = \infty$ \\
\hline
 SignGF &  $L_{3/2}$ & $L_1$ \\
GF & $L_1$ & $L_1$ \\
\end{tabular}\label{table : induced implicit regularization comparison}
\end{wraptable}

Corollary \ref{lemma : deep mirror sign implicit} shows that depth changes the induced implicit regularization. The exponent decreases from $3/2$ at depth $L=2$ to $1$ as $L$ increases (Table \ref{table : induced implicit regularization comparison}). Hence, increasing depth strengthens the sparsity bias of the smoothened dynamics. At the same time, the potential remains convex for every finite depth, which ensures that the corresponding implicit regularization problem remains globally well defined.

\begin{remark}
    Corollary \ref{lemma : deep mirror sign implicit} stands in stark contrast with the known results for gradient flow \citep{woodworth2020kernel}, where independent of depth $L$ the implicit regularization is $L_1$ for finitely small initialization $\lambda_0$ (Table \ref{table : induced implicit regularization comparison}). Moreover, for $\lambda_0 = 0$, the corresponding gradient flow mirror potential is non-convex when $L > 2$ on $\mathbb{R}^d$, whereas the smoothened sign gradient flow mirror potential remains convex for all $L \geq 2$. 
\end{remark}

Together with Section~\ref{section : main result spectral dynamics}, this shows that stochastic smoothing restores sequential feature learning through mode-dependent growth and induces a limiting implicit regularization approaching $L_1$ with increasing depth. We now examine these effects empirically.

\section{Empirical validation}\label{section : empirical validation}
We validate the theory in controlled linear settings and study learned complexity in Transformers.

\paragraph{Matrix factorization.}
The matrix factorization experiment in Figure~\ref{fig: main} illustrates our main prediction. Full-batch Muon learns singular modes in parallel, whereas small-batch noise restores sequential feature learning. In Appendix~\ref{appendix : matrix factorization}, we provide additional ablations showing that this qualitative distinction persists when momentum is included and across a wider range of learning rates.

\begin{figure}[ht]
    \centering
    \includegraphics[width=0.99\linewidth]{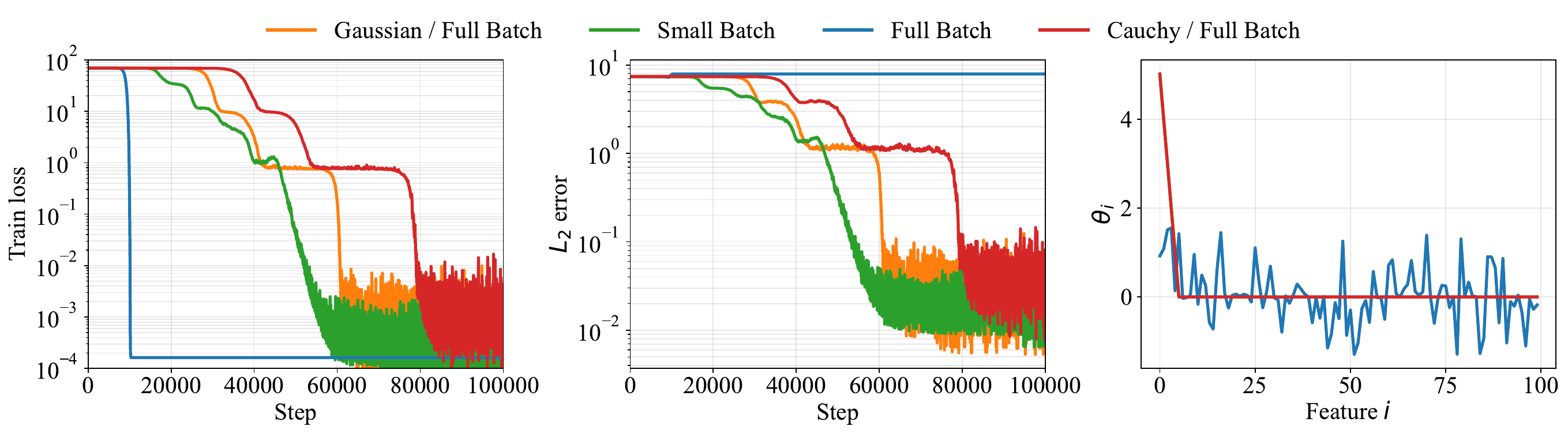}
    \caption{Noise changes the solution selected by sign descent in a diagonal linear network of depth $L =20$. We compare full-batch updates, small-batch updates, and full-batch updates with Gaussian or Cauchy noise. Left: training loss. Middle: $L_2$ reconstruction error. Right: final predictor coefficients, with first 5 non-zero. Small-batch and injected-noise runs recover the same sparse target and thus largely overlap, whereas deterministic full-batch training reaches low loss without recovery.}
    \label{fig: implicit bias and noise injection}
\end{figure}

\paragraph{Implicit regularization and heavy-tailed noise.}
We train a diagonal linear network on a sparse reconstruction task (see Appendix \ref{appendix : diagonal linear network}). Figure~\ref{fig: implicit bias and noise injection} shows that full-batch sign descent achieves low training loss but selects a dense solution with large reconstruction error (and thus poor generalization). Small-batch updates, injected Gaussian or Cauchy noise instead recover the sparse target structure. This is consistent with the predicted $L_{1+1/L}$-type implicit regularization at large depth ($L=20$) and indicates that the smoothing mechanism extends to heavy-tailed noise with infinite variance.

\begin{figure}
    \centering
     \includegraphics[width=0.8\linewidth]{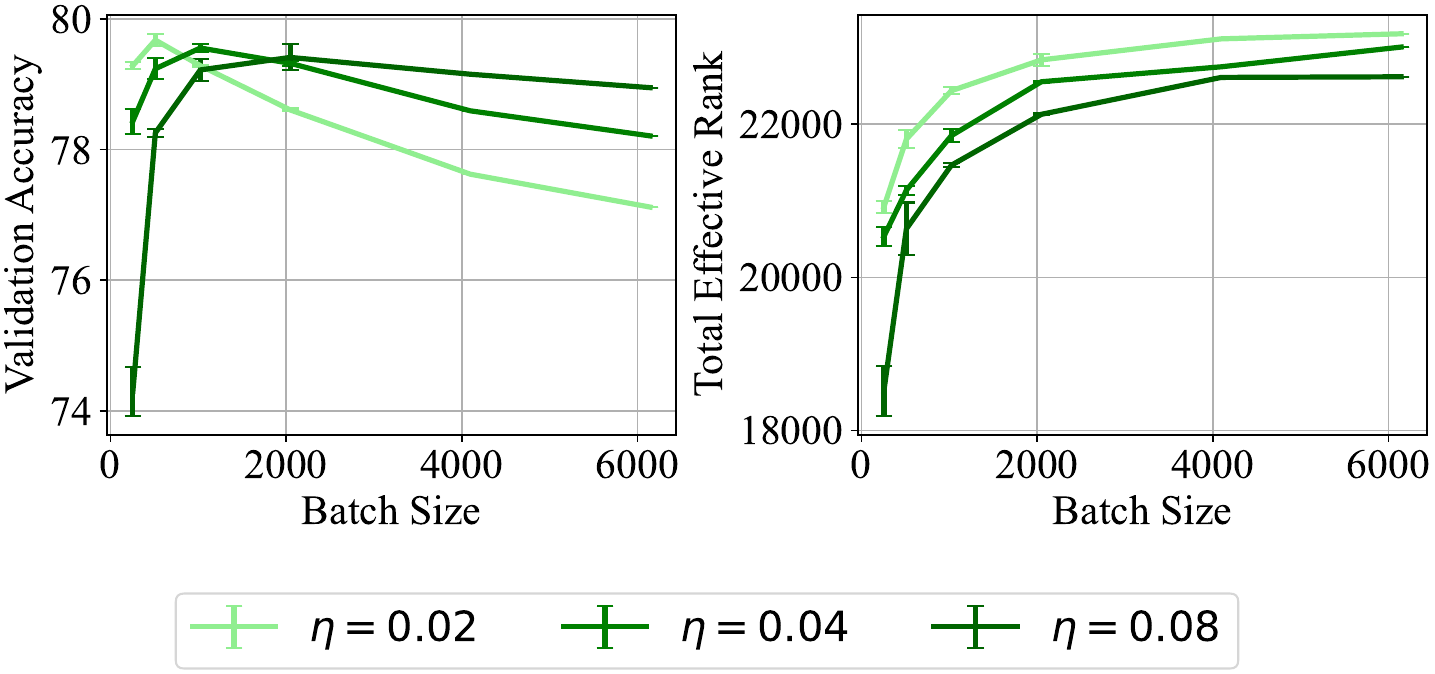}
    \caption{Final validation accuracy and effective rank of a DeiT-small model trained on ImageNet. Small batches lead to lower effective rank indicating the presence of simplicity bias.}
    \label{fig: cifar100 exp}
\end{figure}

\begin{wraptable}{r}{0.6\textwidth} 
    \centering
    \caption{Total effective ranks for a 162M-parameter Llama model trained on 3.2B tokens from FineWeb-Edu. Larger batches ($B = 256$) yield higher effective ranks under both fixed and linearly scaled learning rates.}
    \label{tab:llm-effective-rank}
   \begin{tabular}{lrr}
    \toprule
    Batch size & Fixed $\eta$ & Scaled $\eta$ \\
    \midrule
    $B$  & $42{,}927.81 \pm 17.82$ & $42{,}927.81 \pm 17.82$ \\
    $2B$ & $43{,}040.07 \pm 15.96$ & $42{,}951.67 \pm 9.63$  \\
    $4B$ & $43{,}181.65 \pm 9.73$  & $43{,}047.80 \pm 10.34$ \\
    \bottomrule
\end{tabular}
\end{wraptable}

\paragraph{Batch size, learning rate, and model complexity.}
To highlight that our general insights can transfer to transformers, we train a DeiT-small \cite{Touvron2020TrainingDI} on ImageNet with Muon, varying batch size and learning rate. Figure~\ref{fig: cifar100 exp} shows that smaller batches yield lower effective rank (Appendix \ref{appendix : vision experimental details}) at the end of training across the tested learning rates, consistent with a stronger simplicity bias, while larger batches tend to reduce validation accuracy.
Importantly, the same trend appears in the functionally relevant attention interaction $Q^\top K$, whose effective rank increases by roughly (42\%) across the batch-size range (Figure~\ref{fig:qk rank}).
We observe similar trends on CIFAR-100 and CIFAR-10 (Appendix~\ref{appendix : vision experimental details}). Moreover, we train a 162M-parameter Llama model on FineWeb-Edu at a fixed token budget. Table~\ref{tab:llm-effective-rank} shows a consistent increase in the sum of effective ranks with batch size under both fixed and linearly scaled learning rates.
Together, these results support a connection between batch size and learned complexity beyond the linear setting.

\section{Discussion and conclusion}
Our analysis highlights a qualitative difference between gradient flow and the spectral flow underlying Muon. Gradient flow already exhibits sequential feature learning in the deterministic setting and centered noise leaves its expected dynamics unchanged. 
In contrast, spectral flow removes gradient-magnitude information, causing singular modes to be learned in parallel in the full-batch setting. 
Stochasticity (resulting from mini-batch sampling) is therefore not just a perturbation of the deterministic Muon dynamics.
By smoothing the spectral sign operator, small-batch noise reintroduces gradient-dependent growth and, together with depth, recovers sequential feature learning.

This suggests that the simplicity bias of an optimizer cannot generally be inferred from its deterministic optimization geometry alone. 
In our setting, stochastic smoothing affects both the transient dynamics and the selected solution, inducing a depth-dependent $L_{1+1/L}$-type implicit regularization that approaches $L_1$ with increasing depth. Consistent with these findings, larger batches yield higher effective rank in both vision and language models, linking batch size directly to learned complexity.
More broadly, our results point to an interaction between optimizer geometry, parameterization, and stochasticity: The same deterministic update rule can exhibit qualitatively different simplicity biases depending on the training regime. 
Future extensions of our theory to structured mini-batch noise, nonlinear architectures, and other matrix-aware optimizers may further clarify when stochasticity merely perturbs an optimizer's implicit bias and when it fundamentally reshapes it.

\newpage


\paragraph{Acknowledgements.}
We gratefully acknowledge funding from the European Research Council (ERC) under the Horizon Europe Framework Programme (HORIZON) for proposal number 101116395
SPARSE-ML.

\bibliography{sample}
\bibliographystyle{iclr2027_conference}

\appendix

\newpage

\section{Extended related work}\label{appendix : extended related work}

\paragraph{Saddle escape and regularization.}
There is also work on how stochasticity, normalization, and non-Euclidean geometry affect escape from saddle points
\citep{Jin2017HowTE,roy2020escaping,Long2026DimensionFreeSE,jacobs2026never}. In particular, dimension-independent saddle-escape guarantees have recently been established for Muon \citep{Long2026DimensionFreeSE}. These results characterize local escape from saddle neighborhoods and are therefore closely connected to simplicity bias, since sequential feature learning can be viewed as progression through a hierarchy of saddles. Our work focuses on the relative escape time of each feature, which determines whether features emerge sequentially or in parallel.
Finally, optimizer geometry interacts with explicit regularization and parameterization in selecting the eventual predictor
\citep{dangelo2024needweightdecaymodern,jacobs2025mirror,jacobs2024maskmirrorimplicitsparsification}. In this work we focus however on the sequential learning and implicit regularization themselves without explicit regularization.

\section{Mirror flow and implicit bias}\label{appendix : preliminary implicit bias}

We recall the implicit regularization results for mirror flow that we use in the
proofs below. Let $X\in\mathbb{R}^{n\times d}$ denote the data matrix whose
$i$-th row is $x_i^\top$, and let $y\in\mathbb{R}^n$ denote the corresponding
labels. We consider the mean squared error loss
\[
    \mathcal{L}(\theta)
    := \frac{1}{2n}\sum_{i=1}^n
    \bigl(\langle x_i,\theta\rangle-y_i\bigr)^2.
\]
Given a Legendre function $R:\mathbb{R}^d\rightarrow
\mathbb{R}\cup\{+\infty\}$, the corresponding mirror flow is defined by
\begin{equation}\label{equation : mirror}
    \frac{d}{dt}\nabla R(\theta_t)
    =-\nabla\mathcal{L}(\theta_t),
    \qquad
    \theta_0=\theta_{\mathrm{init}}.
\end{equation}
Whenever $R$ is twice differentiable, this can equivalently be written in
primal coordinates as
\[
    \dot{\theta}_t
    =-\nabla^2R(\theta_t)^{-1}
      \nabla\mathcal{L}(\theta_t).
\]
Thus, the inverse Hessian of $R$ gives the parameter dependent mobility of
the flow. This observation will allow us to identify the smoothened sign
dynamics derived in the main text with a mirror flow.

\begin{definition}(Legendre Function, Definition 3.8 (\citep{Li2022ImplicitBO})) \label{definition : Legendre function}
Let $R : \mathbb{R}^d \rightarrow \mathbb{R} \cup \{\infty\}$ be a differentiable
 convex function. We say $R$ is a Legendre function when the following holds:
1) $R$ is strictly convex on the interior of its domain $\text{int} (\text{dom} R)$.
2) For any sequence $\{\theta_i\}^{\infty}_{i = 1}$ going to the boundary of $\text{dom} R$, the gradient diverges, i.e.  $\lim_{i\rightarrow \infty}||\nabla R(\theta_i)||_{L_2}^2 = \infty$.
\end{definition}

The geometry induced by $R$ can be expressed through its associated Bregman divergence.

\begin{definition}\label{definition : Bregman divergence}
For a differentiable convex function
$R:\mathbb{R}^d\rightarrow\mathbb{R}\cup\{+\infty\}$, the Bregman divergence
between $\theta_1,\theta_2\in\operatorname{int}(\operatorname{dom}R)$ is
defined as
\[
    D_R(\theta_1,\theta_2)
    :=
    R(\theta_1)-R(\theta_2)
    -\left\langle
        \nabla R(\theta_2),\theta_1-\theta_2
    \right\rangle.
\]
\end{definition}

The Bregman divergence is generally not symmetric, but it is non-negative
and vanishes when its two arguments coincide. For the Euclidean potential
$R(\theta)=\frac{1}{2}\|\theta\|_2^2$, it reduces to
$D_R(\theta_1,\theta_2)=\frac{1}{2}\|\theta_1-\theta_2\|_2^2$.
Moreover, adding an affine function to $R$ changes neither its Hessian nor its Bregman divergence. To guarantee convergence of the mirror flow, we additionally use the
following regularity conditions.

\begin{definition}(Bregman function Definition 4.1 \citep{Alvarez_2004})\label{definition : Bregman function appendix}
 A function $R$ is called a
 Bregman function if it satisfies the following properties:
 \begin{itemize}
     \item $\text{dom}R$ is closed. $R$ is strictly convex and continuous on $\text{dom} R$. $R$ is $C^1$ on $\text{int} (\text{dom}R ))$.
     \item For any $\theta \in \text{dom} R$ and $\gamma \in \mathbb{R}$, $\{\xi \in \text{dom} R | D_R(\theta,\xi) \leq \gamma\}$ is bounded.
     \item For any $\theta \in \text{dom} R$ and sequence $\{\theta_i\}^{\infty}_{i=1} \subset \text{int}(\text{dom} R)$ such that $\lim_{i \rightarrow \infty} \theta_i = \theta$, it holds that $\lim_{i\rightarrow \infty} D_R(\theta,\theta_i) \rightarrow 0$.
     
 \end{itemize}
\end{definition}

\paragraph{Time reparameterization.}
The dynamics derived in the main text contain an additional positive scalar
factor depending on the loss. More generally, consider
\[
    \dot{\theta}_t
    =-a_t\nabla^2R(\theta_t)^{-1}
      \nabla\mathcal{L}(\theta_t),
    \qquad a_t>0.
\]
Under the time change $\tau(t)=\int_0^t a_s\,ds$, this becomes the standard
mirror flow in Eq.~(\ref{equation : mirror}). Consequently, such a scalar
factor changes the speed at which the trajectory is traversed, but not the
trajectory itself or its limiting implicit bias.

\paragraph{Implicit regularization.}
For overparameterized linear regression, the loss can admit many
interpolating solutions. Mirror flow selects among them according to the
Bregman geometry induced by $R$. The following result makes this statement
precise.

\begin{theorem}(Theorem 3.9 \citep{Li2022ImplicitBO})\label{thm : mirror lin reg implicit bias reparams}
 Denote the dataset $\{x_i,y_i\}_{i =1}^n$ as $(X, y)$. Initialized at
 $\theta_0 = \theta_{\text{init}}$, assume that the mirror flow Eq.~(\ref{equation : mirror}) converges to $\theta_{\infty} = \lim_{t\rightarrow\infty} \theta_t$, which satisfies
 $X^T\theta_{\infty} =y$, then
 \begin{equation*}
     D_R(\theta_{\infty}, \theta_0) = \min_{\theta : X\theta =y} D_R(\theta, \theta_0), 
 \end{equation*}
 where  $D_R(\theta, \theta_0)$ is defined as in Definition \ref{definition : Bregman divergence}. 
\end{theorem}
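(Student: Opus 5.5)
The plan is to use the first-order optimality conditions of the Bregman projection together with the fact that mirror flow keeps the dual variable $\nabla R(\theta_t)$ in an affine subspace determined by the data. We read the hypothesis $X^T\theta_\infty=y$ as interpolation $X\theta_\infty=y$, consistent with the constraint set $\{\theta: X\theta=y\}$. Throughout, we assume that the trajectory and its limit stay in $\operatorname{int}(\operatorname{dom}R)$, where $R$ is $C^1$ and strictly convex.

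\emph{Step 1 (dual invariance).} For the squared loss we have $\nabla\mathcal{L}(\theta)=\frac1n X^\top(X\theta-y)$. Integrating Eq.~(\ref{equation : mirror}) from $0$ to $t$ gives
\[
\nabla R(\theta_t)-\nabla R(\theta_0)=-\frac1n X^\top\!\int_0^t (X\theta_s-y)\,ds \in \operatorname{range}(X^\top).
\]
The subspace $\operatorname{range}(X^\top)$ is finite-dimensional and hence closed. Since $\theta_t\to\theta_\infty$ and $\nabla R$ is continuous at $\theta_\infty$, letting $t\to\infty$ yields $\nabla R(\theta_\infty)-\nabla R(\theta_0)=X^\top\nu$ for some $\nu\in\mathbb{R}^n$.

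\emph{Step 2 (three-point identity).} For any feasible $\theta$, expanding Definition~\ref{definition : Bregman divergence} gives
\[
D_R(\theta,\theta_0)-D_R(\theta_\infty,\theta_0)=D_R(\theta,\theta_\infty)+\bigl\langle \nabla R(\theta_\infty)-\nabla R(\theta_0),\,\theta-\theta_\infty\bigr\rangle .
\]
By Step 1 the inner product equals $\langle \nu, X\theta-X\theta_\infty\rangle=\langle\nu,y-y\rangle=0$. Convexity of $R$ gives $D_R(\theta,\theta_\infty)\ge 0$, so $D_R(\theta,\theta_0)\ge D_R(\theta_\infty,\theta_0)$. Since $\theta_\infty$ is itself feasible, it attains the minimum. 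Strict convexity of $R$ gives $D_R(\theta,\theta_\infty)>0$ for $\theta\neq\theta_\infty$, so the minimizer is moreover unique.

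\emph{Main obstacle.} The only delicate point is the limit in Step 1. It needs $\theta_\infty\in\operatorname{int}(\operatorname{dom}R)$ so that $\nabla R$ is continuous there. If $\theta_\infty$ could lie on the boundary, we would use the Legendre property (Definition~\ref{definition : Legendre function}). Blow-up of $\nabla R$ at the boundary, combined with the bound that $\int_0^t (X\theta_s-y)\,ds$ stays bounded once the loss decays integrably, rules out that case. For the potentials used in this paper $\operatorname{dom}R=\mathbb{R}^d$, so the issue does not arise.
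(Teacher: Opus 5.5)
Your proof is correct and follows essentially the argument the paper relies on (and spells out at the end of the proof of Theorem~\ref{lemma : implicit bias two layer}): integrating the mirror flow keeps $\nabla R(\theta_t)-\nabla R(\theta_0)$ in $\operatorname{range}(X^\top)$, which is the KKT condition of the constrained Bregman problem, and your three-point identity makes explicit why that condition suffices for optimality (reading $X^T\theta_\infty=y$ as $X\theta_\infty=y$ is the right fix of the typo). Your boundary aside does not work as stated, since integrability of the loss does not make $\int_0^t (X\theta_s-y)\,ds$ bounded, and an entropy mirror flow can converge to a boundary point of $\operatorname{dom}R$; it is not needed, though, because $\operatorname{dom}R=\mathbb{R}^d$ for the potentials used in this paper.
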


Thus, once the effective dynamics have been written as a mirror flow, their
implicit regularization is determined by the corresponding potential $R$.
The remaining condition in Theorem
\ref{thm : mirror lin reg implicit bias reparams} is convergence. We use the
following result, stated as Theorem 4.14 in \cite{Li2022ImplicitBO} and
originally as Theorem 4.2 in \cite{Alvarez_2004}, to establish it.

\begin{theorem}(Theorem 4.14 \citep{Li2022ImplicitBO})\label{Thm : Arora 4.14}
Let $R$ be a Legendre and Bregman function, which is twice differentiable on its interior domain and $\nabla^2 R$ is locally Lipschitz. Moreover, assume that $\mathcal{L}$ is quasi-convex, $\nabla \mathcal{L}$ is locally Lipschitz and $\text{argmin} \{ \mathcal{L}(x) | x \in \mathbb{R}^n\}$ is non-empty. Then as $t \rightarrow \infty$, $\theta_t$ converges to some critical point $\theta^*$. Moreover, if $\mathcal{L}$ is convex $\theta_t$ converges to a minimizer of $\mathcal{L}$.
\end{theorem}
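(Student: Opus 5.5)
The plan is the Lyapunov/Opial argument of \cite{Alvarez_2004}, with the Bregman divergence to a suitable comparison set as the Lyapunov function. \emph{Step 1 (well-posedness).} Since $\nabla^2R$ and $\nabla\mathcal{L}$ are locally Lipschitz on $\operatorname{int}(\operatorname{dom}R)$, the primal form $\dot\theta_t=-\nabla^2R(\theta_t)^{-1}\nabla\mathcal{L}(\theta_t)$ has a unique local solution. To extend it to all $t\ge 0$ I need two facts. First, the trajectory must not reach $\partial\operatorname{dom}R$ in finite time. This follows from the Legendre property: on any finite interval $[0,T]$, $\nabla R(\theta_t)=\nabla R(\theta_0)-\int_0^t\nabla\mathcal{L}(\theta_s)\,ds$ stays bounded as long as $\theta_s$ is bounded, whereas approaching the boundary would force $\|\nabla R\|\to\infty$. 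Second, the trajectory must stay bounded, which Step 2 provides. I will also record
\[
\frac{d}{dt}\mathcal{L}(\theta_t)=-\bigl\langle\nabla\mathcal{L},\nabla^2R^{-1}\nabla\mathcal{L}\bigr\rangle\le 0,
\]
so $\mathcal{L}(\theta_t)$ decreases to some limit $\ell_\infty\ge\inf\mathcal{L}$. Integrating also gives $\int_0^\infty\langle\nabla\mathcal{L},\nabla^2R^{-1}\nabla\mathcal{L}\rangle\,dt<\infty$.

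\emph{Step 2 (Fejér monotonicity).} Let $S=\{\theta\in\operatorname{dom}R:\mathcal{L}(\theta)\le\ell_\infty\}$. It is nonempty because the argmin is nonempty. For $\bar\theta\in S$, differentiating the definition of $D_R$ and using $\frac{d}{dt}\nabla R(\theta_t)=-\nabla\mathcal{L}(\theta_t)$ gives
\[
\frac{d}{dt}D_R(\bar\theta,\theta_t)=\bigl\langle\nabla\mathcal{L}(\theta_t),\bar\theta-\theta_t\bigr\rangle .
\]
Since $\mathcal{L}(\bar\theta)\le\ell_\infty\le\mathcal{L}(\theta_t)$, quasi-convexity makes the right-hand side nonpositive, so $D_R(\bar\theta,\theta_t)$ is nonincreasing. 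The bounded-sublevel-set property of a Bregman function then shows that $\{\theta_t\}$ is bounded, which completes Step 1. It also guarantees that cluster points exist.

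\emph{Step 3 (criticality and convergence).} Take a cluster point $\theta^*=\lim_k\theta_{t_k}$. By continuity $\mathcal{L}(\theta^*)=\ell_\infty$, so $\theta^*\in S$. By the third Bregman property, $D_R(\theta^*,\theta_{t_k})\to0$. Monotonicity from Step 2 then gives $D_R(\theta^*,\theta_t)\to0$ along the whole trajectory. Using strict convexity and boundedness, this forces $\theta_t\to\theta^*$ (any other cluster point $\theta'$ would satisfy $D_R(\theta^*,\theta')=0$, hence $\theta'=\theta^*$).

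For criticality, suppose $\theta^*\in\operatorname{int}\operatorname{dom}R$. Then $\nabla^2R^{-1}$ is uniformly positive definite on a neighborhood of $\theta^*$. If $\nabla\mathcal{L}(\theta^*)\neq0$, the integrand $\langle\nabla\mathcal{L},\nabla^2R^{-1}\nabla\mathcal{L}\rangle$ would be bounded below by a positive constant for all large $t$, contradicting finiteness of the integral in Step 1. So $\nabla\mathcal{L}(\theta^*)=0$. In the convex case every critical point is a global minimizer, which gives the last claim.

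\emph{Main obstacle.} The delicate step is the case where $\theta^*$ lies on $\partial\operatorname{dom}R$. There the mobility $\nabla^2R^{-1}$ may degenerate, and the integrability argument no longer yields $\nabla\mathcal{L}(\theta^*)=0$. I would first try the dual picture. Since $\nabla R(\theta_t)=\nabla R(\theta_0)-\int_0^t\nabla\mathcal{L}$ and $\theta_t$ converges, a nonzero limiting gradient $g=\nabla\mathcal{L}(\theta^*)$ would make $\nabla R(\theta_t)$ drift linearly in the direction $-g$. Then $\langle\nabla\mathcal{L}(\theta_t),\bar\theta-\theta_t\rangle$ stays strictly negative for some $\bar\theta\in S$ chosen near $\theta^*$, which would drive $D_R(\bar\theta,\theta_t)\to-\infty$, a contradiction. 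For the potentials used later in the paper, $\operatorname{dom}R=\mathbb{R}^d$, so this boundary case does not arise in the applications.
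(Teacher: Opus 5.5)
The paper gives no proof of its own here; it cites \citep{Li2022ImplicitBO} and \citep{Alvarez_2004}, and your Steps 1--3 are the Lyapunov/Opial argument behind those citations. When $\operatorname{dom}R=\mathbb{R}^d$, which is the only case the paper uses, your steps are sound. The genuine gap is your ``main obstacle'': the repair you sketch aims at $\nabla\mathcal{L}(\theta^*)=0$, and that is false in general at a boundary limit. Take $R(\theta)=\theta\log\theta-\theta$ on $\operatorname{dom}R=[0,\infty)$. This $R$ is Legendre and Bregman, with $\nabla^2R=1/\theta$ locally Lipschitz on $(0,\infty)$. Take also $\mathcal{L}(\theta)=\tfrac12(\theta+1)^2$. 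The flow is $\dot\theta=-\theta(\theta+1)$, so $\theta_0=1$ gives $\theta_t=1/(2e^t-1)\to0$, yet $\mathcal{L}'(0)=1$.

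Your drift argument cannot rule this out. The linear drift $\nabla R(\theta_t)=\log\theta_t\approx-t$ is exactly what the Legendre property forces near the boundary. Moreover $S=\{0\}=\{\theta^*\}$, so no $\bar\theta\in S$ keeps $\langle\nabla\mathcal{L}(\theta_t),\bar\theta-\theta_t\rangle$ bounded away from zero. Hence, once $\operatorname{dom}R\neq\mathbb{R}^d$, the printed statement can fail if ``critical point'' and ``minimizer'' are read in the unconstrained sense. What holds is the constrained version. It assumes $\argmin_{\operatorname{dom}R}\mathcal{L}\neq\emptyset$, which is what your claim $S\neq\emptyset$ actually uses, since a minimizer outside $\operatorname{dom}R$ is not a valid comparison point. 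It concludes $\langle\nabla\mathcal{L}(\theta^*),x-\theta^*\rangle\ge0$ for all $x\in\operatorname{dom}R$, and, for convex $\mathcal{L}$, that $\theta^*$ minimizes $\mathcal{L}$ over $\operatorname{dom}R$.

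Your dual idea does prove this once you test against every $x\in\operatorname{dom}R$ rather than only against $S$. Convexity of $R$ gives $\langle\nabla R(\theta_t),x-\theta_t\rangle\le R(x)-R(\theta_t)$, which is bounded above along the bounded trajectory. Substitute $\nabla R(\theta_t)=\nabla R(\theta_0)-\int_0^t\nabla\mathcal{L}(\theta_s)\,ds$, and use $\nabla\mathcal{L}(\theta_s)\to g=\nabla\mathcal{L}(\theta^*)$ and $\theta_t\to\theta^*$. This yields $-t\langle g,x-\theta^*\rangle+o(t)\le C$, hence $\langle g,x-\theta^*\rangle\ge0$. At an interior limit this forces $g=0$, so it subsumes your integrability argument.

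For convex $\mathcal{L}$ it is simpler to integrate $\tfrac{d}{dt}D_R(\bar\theta,\theta_t)\le\mathcal{L}(\bar\theta)-\mathcal{L}(\theta_t)$ and use monotonicity of $\mathcal{L}(\theta_t)$. This gives $\mathcal{L}(\theta_t)-\mathcal{L}(\bar\theta)\le D_R(\bar\theta,\theta_0)/t$ for every $\bar\theta\in\operatorname{dom}R$, so $\mathcal{L}(\theta^*)=\inf_{\operatorname{dom}R}\mathcal{L}$ directly. This is the same bound the paper invokes later.

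One smaller repair: to exclude a second cluster point $\theta'\in\partial\operatorname{dom}R$ you cannot evaluate $D_R(\theta^*,\theta')$, because it is undefined there. Use instead $D_R(\theta^*,\theta_{s_k})\ge R(\theta^*)+R(\theta_{s_k})-2R\bigl(\tfrac{\theta^*+\theta_{s_k}}{2}\bigr)$. By continuity and strict convexity of $R$ on the closed set $\operatorname{dom}R$, its limit is positive unless $\theta'=\theta^*$.
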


Our argument regarding the implicit regularization therefore consists of three steps: we first identify the
mobility of the smoothened sign dynamics and construct the corresponding mirror potential $R$; we then verify that $R$ is a Legendre and Bregman function so that Theorem \ref{Thm : Arora 4.14} guarantees convergence and finally, Theorem \ref{thm : mirror lin reg implicit bias reparams} characterizes the selected interpolating solution. Furthermore under the additional assumption of the PL-inequality (Definition \ref{definition : PL inequality}) we can get convergence in finite time for our setting. Note for the MSE loss $\mathcal{L}(\theta^*) =0$.

\begin{definition}(PL-inequality)\label{definition : PL inequality}
    For a differentiable function $\mathcal{L}: \mathbb{R}^d \rightarrow \mathbb{R}$ and positive constant $\Lambda > 0$ we say $\mathcal{L}$ satisfies the PL-inequality if for all $\theta \in \mathbb{R}^d$ we have:
    \begin{equation*}
        \|\nabla \mathcal{L}(\theta) \|_2^2 \geq 2\Lambda ( \mathcal{L}(\theta) -\mathcal{L}(\theta^*) )
    \end{equation*}
    where $\theta^* \in \argmin_{\theta \in \mathbb{R}^d} \mathcal{L}(\theta)$.
\end{definition}

\newpage

\section{Proof of the spectral flows}\label{section : expected flows}

Proof of Theorem \ref{theorem: spectral flow}.

\begin{proof}
We first show the result for $L =2$. The data are assumed to be whitened, \(XX^\top/n=I_d\), the loss can be written,
up to an additive constant independent of \(U\) and \(V\), as
\[
\mathcal{L}(U,V)
=
\frac{1}{2}\left\|UV-C_{yx}\right\|_{\mathrm F}^2,
\qquad
C_{yx}=P\Lambda Q^\top,
\]
where
\[
\Lambda=\operatorname{diag}(\lambda_1,\ldots,\lambda_d).
\]
It follows that
\[
\nabla_U\mathcal{L}(U,V)
=
(UV-C_{yx})V^\top
\]
and
\[
\nabla_V\mathcal{L}(U,V)
=
U^\top(UV-C_{yx}).
\]

By singular-vector alignment and balanced initialization, we may write
\[
U_0=PD_0,
\qquad
V_0=D_0Q^\top,
\]
where
\[
D_0
=
\operatorname{diag}
\left(
\sqrt{\theta_{1,\mathrm{init}}},
\ldots,
\sqrt{\theta_{d,\mathrm{init}}}
\right).
\]

We first show that this structure is preserved by the spectral gradient
flow. Suppose that, at some time \(t\),
\[
U_t=PD_t,
\qquad
V_t=D_tQ^\top,
\]
where
\[
D_t=\operatorname{diag}(d_1(t),\ldots,d_d(t)),
\qquad
d_i(t)\geq 0.
\]
Then
\[
U_tV_t=PD_t^2Q^\top,
\]
and hence
\[
\theta_i(t)=d_i(t)^2.
\]
Moreover,
\[
U_tV_t-C_{yx}
=
P(D_t^2-\Lambda)Q^\top.
\]
Therefore,
\[
\nabla_U\mathcal{L}(U_t,V_t)
=
P(D_t^2-\Lambda)D_t
\]
and
\[
\nabla_V\mathcal{L}(U_t,V_t)
=
D_t(D_t^2-\Lambda)Q^\top.
\]

Define
\[
S_t
=
\operatorname{diag}
\left(
\text{Sign}(\theta_i(t)-\lambda_i)
\right)_{i=1}^d.
\]
Since all matrices inside the singular-vector bases are diagonal, the
matrix sign operator satisfies
\[
\text{Sign}\left(\nabla_U\mathcal{L}(U_t,V_t)\right)
=
PS_t
\]
and
\[
\text{Sign}\left(\nabla_V\mathcal{L}(U_t,V_t)\right)
=
S_tQ^\top.
\]
Consequently, the spectral gradient flow is
\[
\dot{U}_t
=
-\text{Sign}\left(\nabla_U\mathcal{L}(U_t,V_t)\right)
=
-PS_t
\]
and
\[
\dot{V}_t
=
-\text{Sign}\left(\nabla_V\mathcal{L}(U_t,V_t)\right)
=
-S_tQ^\top.
\]
Thus, the aligned and balanced structure is preserved, and
\[
\dot{D}_t=-S_t.
\]
In particular, the singular modes evolve independently and satisfy
\[
\dot{d}_i(t)
=
\text{Sign}(\lambda_i-\theta_i(t)).
\]

Since \(\theta_i(t)=d_i(t)^2\), we obtain
\[
\dot{\theta}_i(t)
=
2d_i(t)\dot{d}_i(t)
=
2\sqrt{\theta_i(t)}
\text{Sign}(\lambda_i-\theta_i(t)).
\]
Absorbing the constant factor \(2\) into the time parametrization gives
\[
\dot{\theta}_i(t)
=
\sqrt{\theta_i(t)}
\text{Sign}(\lambda_i-\theta_i(t)).
\]

Fixing one singular mode and suppressing the index \(i\), we have
\[
\dot{\theta}_t
=
\sqrt{\theta_t}\,
\text{Sign}(\lambda-\theta_t),
\qquad
\theta_0=\theta_{\mathrm{init}}.
\]

To solve this equation, define
\[
r_t:=\sqrt{\theta_t}.
\]
Since \(\theta_t=r_t^2\), the chain rule gives
\[
\dot{\theta}_t=2r_t\dot{r}_t.
\]
Whenever \(r_t>0\),
\[
2r_t\dot{r}_t
=
r_t\text{Sign}(\lambda-r_t^2),
\]
and therefore
\[
\dot{r}_t
=
\frac{1}{2}
\text{Sign}(\lambda-r_t^2).
\]
Because \(r_t\geq 0\) and \(\lambda\geq 0\),
\[
\text{Sign}(\lambda-r_t^2)
=
\text{Sign}(\sqrt{\lambda}-r_t).
\]
Thus, \(r_t\) moves toward \(\sqrt{\lambda}\) at constant speed \(1/2\)
and remains there once it reaches \(\sqrt{\lambda}\).

Since
\[
r_0=\sqrt{\theta_{\mathrm{init}}},
\]
the solution is
\[
r_t
=
\sqrt{\lambda}
+
\text{Sign}\left(
\sqrt{\theta_{\mathrm{init}}}-\sqrt{\lambda}
\right)
\left(
\left|
\sqrt{\theta_{\mathrm{init}}}-\sqrt{\lambda}
\right|
-\frac{t}{2}
\right)_+.
\]
Finally, using \(\theta_t=r_t^2\), we obtain
\[
\theta_t
=
\left(
\sqrt{\lambda}
+
\text{Sign}\left(
\sqrt{\theta_{\mathrm{init}}}-\sqrt{\lambda}
\right)
\left(
\left|
\sqrt{\theta_{\mathrm{init}}}-\sqrt{\lambda}
\right|
-\frac{t}{2}
\right)_+
\right)^2.
\]
The dynamic description extension for higher depth $L$ follows analogous, this concludes the proof.
\end{proof}

Proof of Theorem \ref{theorem : spectral smooth flow}.

\begin{proof}
By alignment, the singular modes decouple, and Gaussian smoothing gives
\[
\mathbb{E}\left[
\text{Sign}(\lambda_i-\theta_i+\xi_i)
\right]
=
\operatorname{erf}\left(
\frac{\lambda_i-\theta_i}
{\sqrt{2\sigma^2\mathcal{L}_\tau}}
\right).
\]
Using
\[
\operatorname{erf}(x)
=
\frac{2}{\sqrt{\pi}}x+O(x^3),
\]
we obtain
\[
\frac{d\theta_i}{d\tau}
=
\sqrt{\frac{2}{\pi\sigma^2}}\,
\frac{\sqrt{\theta_i}(\lambda_i-\theta_i)}
{\sqrt{\mathcal{L}_\tau}}
+
O\left(
\frac{
\sqrt{\theta_i}\lvert\lambda_i-\theta_i\rvert^3
}{
\sigma^{3}\mathcal{L}_\tau^{3/2}
}
\right).
\]
Using the fact that, for all $i \in [d]$ and $\mathcal{L}>0$, we have
\[
\frac{|\lambda_i-\theta_i|}{\sqrt{2\mathcal{L}}} \leq 1,
\]
we obtain a uniform bound in terms of the noise strength $\sigma$. Therefore, for sufficiently large noise strength $\sigma$, the approximation holds uniformly along the trajectory. Moreover, as the learning rate $\eta \to 0$, the stochastic fluctuations vanish, while the expected drift remains. Hence, the stated time change yields, to leading order,
\[
\frac{d\theta_i}{dt}
=
\sqrt{\theta_i}(\lambda_i-\theta_i).
\]

For a fixed singular mode, set \(r_t:=\sqrt{\theta_t}\). Then
\[
2r_t\frac{dr_t}{dt}
=
r_t(\lambda-r_t^2),
\]
and hence
\[
\frac{dr_t}{dt}
=
\frac{1}{2}(\lambda-r_t^2).
\]
Assuming \(0<\theta_{\mathrm{init}}<\lambda\), separation of variables gives
\[
\frac{dr_t}{\lambda-r_t^2}
=
\frac{1}{2}\,dt.
\]
Integrating from \(0\) to \(t\), and using
\(r_0=\sqrt{\theta_{\mathrm{init}}}\), yields
\[
\int_{r_0}^{r_t}\frac{dr}{\lambda-r^2}
=
\frac{t}{2}.
\]
Since
\[
\int\frac{dr}{\lambda-r^2}
=
\frac{1}{\sqrt{\lambda}}
\operatorname{artanh}\left(\frac{r}{\sqrt{\lambda}}\right),
\]
we obtain
\[
\operatorname{artanh}\left(\frac{r_t}{\sqrt{\lambda}}\right)
=
\frac{\sqrt{\lambda}}{2}t
+
\operatorname{artanh}\left(
\sqrt{\frac{\theta_{\mathrm{init}}}{\lambda}}
\right).
\]
Applying \(\tanh\) to both sides gives
\[
r_t
=
\sqrt{\lambda}\,
\tanh\left(
\frac{\sqrt{\lambda}}{2}t
+
\operatorname{artanh}\sqrt{
\frac{\theta_{\mathrm{init}}}{\lambda}
}
\right).
\]
Finally, since \(\theta_t=r_t^2\),
\[
\theta_t
=
\lambda
\tanh^2\left(
\frac{\sqrt{\lambda}}{2}t
+
\operatorname{artanh}\sqrt{
\frac{\theta_{\mathrm{init}}}{\lambda}
}
\right).
\]
Now we can use the time change and the fact that the solution is non-decreasing from initialization. We use the alignment and that we then can express the Frobenius norm in terms of a Euclidean norm on the singular values $ \mathcal{L}_{\tau} = \frac{1}{2}\sum_{i= 1}^d (\lambda_i  - \theta_i)^2$, which gives us:
$$d\mathcal{L}_{\tau}  = -\nabla \mathcal{L}_{\tau} \sqrt{\theta_{\tau}} \nabla \mathcal{L}_{\tau}  /\sqrt{\mathcal{L}_{\tau}}  \sqrt{\frac{2}{\pi \sigma^2}} d\tau \leq -\min_i\sqrt{\theta_{i,0}} \sqrt{\mathcal{L_{\tau}}} \sqrt{\frac{2}{\pi \sigma^2}} d\tau$$
Applying a Gr\"onwall type argument gives the result.

\end{proof}

\newpage

\section{Implicit Bias}\label{appendix : implicit bias}

We now focus on a diagonal linear network, with the noise considered in the main text $N(0, \sigma \mathcal{L}I)$. This gives the leading order ODE:
\begin{equation}\label{equation : ode second}
    \begin{cases}
        d u_t = - \text{Sign}(v_t) \sqrt{\frac{2}{\pi}} \nabla_\theta \mathcal{L}(\theta_t) /\sqrt{\mathcal{L}t} dt\\
        d v_t = -   \text{Sign}(u_t)\sqrt{\frac{2}{\pi}} \nabla_\theta \mathcal{L}(\theta_t) /\sqrt{\mathcal{L}t}dt
    \end{cases}
\end{equation}

Note that we now have $\mathbb{E}[\text{Sign}(v \odot g_{\theta})] =\text{sign}(v) \odot \mathbb{E}[\text{Sign}( g_{\theta})] $ where $g_{\theta}$ is the gradient estimate. In the matrix factorization the sign on singular values does not appear as we assumed there that we start at a balanced initialization. In Lemma \ref{lemma : invariant} we derive a dynamical invariant of the flow in Equation \ref{equation : ode second} for constructing of the mirror flow dynamics.

\begin{lemma}\label{lemma : invariant}
    The flow in Eq.~(\ref{equation : ode second}) satisfies the following invariant:
    \begin{equation*}
        |u_t | - |v_t | = |u_0| - |v_0| =: \lambda _0
    \end{equation*}
\end{lemma}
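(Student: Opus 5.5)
The plan is to differentiate $|u_t|-|v_t|$ coordinatewise along the flow in Eq.~(\ref{equation : ode second}) and show the derivative vanishes for almost every $t$. Fix a coordinate $j$ and write the common scalar factor as
\[
c_t := \sqrt{\tfrac{2}{\pi}}\,\bigl(\nabla_\theta\mathcal{L}(\theta_t)\bigr)_j/\sqrt{\mathcal{L}_t},
\]
so that $\dot u_{j,t} = -\operatorname{Sign}(v_{j,t})\,c_t$ and $\dot v_{j,t} = -\operatorname{Sign}(u_{j,t})\,c_t$. The key observation is that both factors receive the same increment up to the sign of the \emph{other} factor. This is exactly the structure of the balanced invariant $u^2-v^2$ for gradient flow, but here with absolute values.

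On any time interval where $u_{j,t}\neq 0$ and $v_{j,t}\neq 0$, the map $t\mapsto |u_{j,t}|$ is differentiable with derivative $\operatorname{Sign}(u_{j,t})\dot u_{j,t}$, and similarly for $|v_{j,t}|$. Hence
\[
\frac{d}{dt}\bigl(|u_{j,t}|-|v_{j,t}|\bigr)
= -\operatorname{Sign}(u_{j,t})\operatorname{Sign}(v_{j,t})\,c_t + \operatorname{Sign}(v_{j,t})\operatorname{Sign}(u_{j,t})\,c_t = 0 .
\]
So the quantity is constant on every such interval.

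It then remains to handle the times at which a coordinate crosses zero. Here I would argue that $u_{j,t}$ and $v_{j,t}$ are absolutely continuous, since the right-hand side is bounded on compact time intervals where $\mathcal{L}_t>0$. Therefore $|u_{j,t}|-|v_{j,t}|$ is absolutely continuous, and it suffices to show that its derivative vanishes almost everywhere.

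On the set $\{t: u_{j,t}=0\}$, the function $|u_{j,t}|$ attains a minimum wherever it is differentiable, so its derivative is $0$ there. Likewise, at almost every point of that set, $\dot u_{j,t}=0$ (a standard fact for absolutely continuous functions on level sets). This forces $c_t\operatorname{Sign}(v_{j,t})=0$, so $\dot v_{j,t}=-\operatorname{Sign}(u_{j,t})c_t=0$ with the convention $\operatorname{Sign}(0)=0$, or else $c_t=0$. Either way both derivatives vanish a.e. on that set. The case $v_{j,t}=0$ is symmetric. Integrating the a.e.\ zero derivative gives $|u_t|-|v_t|=|u_0|-|v_0|=\lambda_0$ for all $t$ in the existence interval.

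The main obstacle is precisely this zero-crossing/nonsmooth part. The sign nonlinearity makes the vector field discontinuous, so existence and uniqueness must be understood in the Filippov or absolutely continuous sense. One also has to ensure the convention for $\operatorname{Sign}(0)$ does not break the symmetry. I would first try the absolute-continuity argument above. If that fails, I would fall back to stating that the invariant holds almost everywhere, which is the form quoted in the proof sketch of Theorem~\ref{lemma : implicit bias two layer}. I would also note that the time factor $1/\sqrt{\mathcal{L}_t}$ can be removed by the time reparameterization of Appendix~\ref{appendix : preliminary implicit bias} and plays no role.
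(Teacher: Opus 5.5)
Your proposal is correct and is the same argument as the paper's, whose proof consists only of the chain-rule computation $\frac{d}{dt}\bigl(|u_{j,t}|-|v_{j,t}|\bigr) = -\operatorname{Sign}(u_{j,t})\operatorname{Sign}(v_{j,t})c_t + \operatorname{Sign}(v_{j,t})\operatorname{Sign}(u_{j,t})c_t = 0$ along Eq.~(\ref{equation : ode second}). Your absolute-continuity treatment of the zero sets adds rigor the paper omits, and it upgrades the main text's ``almost everywhere'' to all $t$, rather than taking a different route.
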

Proof.
Apply the chain rule and use the flow definition in Eq.~(\ref{equation : ode second}).

Note that Lemma \ref{lemma : invariant} also has been established for full batch case \cite{jacobs2026never}.

Using Lemma \ref{lemma : invariant} we show now that the dynamics of $\theta$ follow a mirror flow by deriving the corresponding Riemannian gradient flow in Lemma \ref{lemma : mirror 2d sign}.

\begin{lemma}\label{lemma : mirror 2d sign}
    The dynamics of $\theta_t = u_t \odot v_t$ are described by a time rescaled mirror flow:
    \begin{equation*}
        d \theta_t = -\sqrt{4|\theta_t| + \lambda_0^2} \sqrt{\frac{2}{\pi}} \nabla_\theta \mathcal{L}(\theta_t) /\sqrt{\mathcal{L}t}dt
    \end{equation*}
\end{lemma}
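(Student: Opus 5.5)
We argue coordinatewise, since the flow in Eq.~(\ref{equation : ode second}) acts on each coordinate $j\in[d]$ separately, apart from the common scalar factor $\sqrt{2/\pi}/\sqrt{\mathcal{L}_t}$. Write $g_t:=\nabla_\theta\mathcal{L}(\theta_t)$ and $a_t:=\sqrt{2/\pi}/\sqrt{\mathcal{L}_t}>0$. Then Eq.~(\ref{equation : ode second}) reads $\dot u_j=-\text{Sign}(v_j)\,a_t g_j$ and $\dot v_j=-\text{Sign}(u_j)\,a_t g_j$.

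\textbf{Step 1.} By the product rule,
\[
\dot\theta_j=\dot u_j v_j+u_j\dot v_j=-a_t g_j\bigl(\text{Sign}(v_j)v_j+\text{Sign}(u_j)u_j\bigr)=-a_t g_j\bigl(|u_j|+|v_j|\bigr).
\]
So the mobility is the $\ell_1$-type sum of the factor magnitudes rather than the Euclidean $u_j^2+v_j^2$ that appears for gradient flow.

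\textbf{Step 2.} We express $|u_j|+|v_j|$ through $\theta_j$ and the invariant of Lemma \ref{lemma : invariant}, namely $|u_j|-|v_j|=\lambda_{0,j}$. Since $|\theta_j|=|u_j||v_j|$,
\[
(|u_j|+|v_j|)^2=(|u_j|-|v_j|)^2+4|u_j||v_j|=\lambda_0^2+4|\theta_j|.
\]
As $|u_j|+|v_j|\ge 0$, taking the square root gives $|u_j|+|v_j|=\sqrt{4|\theta_j|+\lambda_0^2}$. Substituting this into Step 1 yields the stated ODE
\[
\dot\theta_t=-\sqrt{4|\theta_t|+\lambda_0^2}\,\sqrt{\tfrac{2}{\pi}}\,\nabla_\theta\mathcal{L}(\theta_t)/\sqrt{\mathcal{L}_t},
\]
with the square root applied entrywise.

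\textbf{Step 3.} We identify this ODE as a time-rescaled mirror flow. Take $R$ separable with $R_j''(\theta_j)=1/\sqrt{4|\theta_j|+\lambda_0^2}$, which is positive and continuous when $\lambda_0\neq 0$. Then $\nabla^2R(\theta)^{-1}=\operatorname{diag}\bigl(\sqrt{4|\theta_j|+\lambda_0^2}\bigr)$, and the dynamics become $\dot\theta=-a_t\nabla^2R(\theta)^{-1}\nabla\mathcal{L}(\theta)$. By the time-reparameterization remark in Appendix \ref{appendix : preliminary implicit bias}, with $\tau=\int_0^t a_s\,ds$, this is the standard mirror flow of Eq.~(\ref{equation : mirror}).

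Integrating $R_j''$ twice reproduces the potential of Theorem \ref{lemma : implicit bias two layer}, up to affine terms fixed by $\nabla R(\theta_0)$. Indeed, the antiderivative is $\tfrac12\operatorname{Sign}(\theta_j)\sqrt{4|\theta_j|+\lambda_0^2}$ plus a constant, and integrating once more gives $\tfrac1{12}(4|\theta_j|+\lambda_0^2)^{3/2}$.

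\textbf{Main obstacle.} The main difficulty is the nonsmoothness of $\text{Sign}$. It affects both the chain rule and the invariant at times where $u_j$ or $v_j$ crosses zero. Our first approach is to interpret the flow in the Filippov or absolutely continuous sense and show that the identities hold almost everywhere, as the paper does for the invariant.

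When $\lambda_0\neq0$, one of $|u_j|,|v_j|$ stays at least $|\lambda_0|>0$, so only one factor can vanish at a time. At such an instant the formula $\dot\theta_j=-a_tg_j(|u_j|+|v_j|)$ is still consistent with the one-sided derivatives. The remaining technical point is the boundary case $\lambda_0=0$ or $\mathcal{L}_t=0$, where the mobility or the time factor degenerates. We handle this by restricting attention to $\mathcal{L}_t>0$.
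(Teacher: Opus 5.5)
Your Steps 1--2 are exactly the paper's argument. The product rule turns the sign factors into the mobility $|u_j|+|v_j|$, and the invariant $|u_j|-|v_j|=\lambda_0$ together with $(|u_j|+|v_j|)^2=(|u_j|-|v_j|)^2+4|u_j||v_j|$ rewrites that mobility as $\sqrt{4|\theta_j|+\lambda_0^2}$. Your Step 3 and the almost-everywhere discussion go further and make explicit what the paper leaves to the proof of the two-layer implicit-bias theorem.

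There is one slip in Step 3, which does not affect the lemma. The function $\tfrac12\operatorname{Sign}(\theta_j)\sqrt{4|\theta_j|+\lambda_0^2}$ jumps by $|\lambda_0|$ at $\theta_j=0$. The continuous antiderivative of $R_j''$ is therefore $\tfrac12\operatorname{Sign}(\theta_j)\bigl(\sqrt{4|\theta_j|+\lambda_0^2}-|\lambda_0|\bigr)$. As a result, the potential contains the non-affine term $-\tfrac{|\lambda_0|}{2}|\theta_j|$, as in the paper's $R$. For $\lambda_0\neq0$, the function $\tfrac1{12}(4|\theta_j|+\lambda_0^2)^{3/2}$ alone is not even differentiable at the origin.
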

Proof.
It follows from the chain rule that:
\begin{equation*}
    d \theta_t = -(|u_t| + |v_t|)\sqrt{\frac{2}{\pi}} \nabla_\theta \mathcal{L}(\theta_t) /\sqrt{\mathcal{L}t}dt
\end{equation*}

By definition and Lemma \ref{lemma : invariant} we can use the following identities:
\begin{equation*}
    \begin{cases}
        \theta = u \odot v \\
        \lambda_0 = |u| - |v|
    \end{cases}
\end{equation*}
This gives:
\begin{equation*}
    \sqrt{4|\theta| + \lambda_0^2} = \sqrt{4|u| |v| +(|u|-|v|)^2} = \sqrt{(|u| + |v|)^2} = |u| + |v|.
\end{equation*}
This concludes the proof. $\square$

\begin{proof}[Proof of Theorem \ref{lemma : implicit bias two layer}]
Define the a time change $\tau : = \sqrt{\frac{2}{\pi}} \int_0^t 1/\sqrt{\mathcal{L}_t}dt$. Then we get the ODE with a Riemannian metric corresponding to a Legendre and Bregman potential $R$:
\begin{equation*}
    d \theta_\tau = -\sqrt{4|\theta_\tau| + \lambda_0^2}  \nabla_\theta \mathcal{L}(\theta_\tau) d\tau.
\end{equation*}
It follows from Theorem 4.2 in \citep{Alvarez_2004} the iterates converge to a minimizer of $L$.
Moreover, the loss $L$ decreases linearly with respect to the time change $\tau$:
\begin{equation*}
     \mathcal{L}(\theta_{\tau})  \leq \frac{D_R(\theta^* , \theta_0)}{\tau}
\end{equation*}

\begin{equation*}
    d \tau (t) /d t = \sqrt{\frac{2}{\pi}} 1/\sqrt{\mathcal{L}_t} \geq \sqrt{\frac{2}{\pi}} \frac{\sqrt{\tau}}{\sqrt{D_R}}
\end{equation*}
which implies that 
\begin{equation*}
    \tau \geq \frac{1}{2\pi}\frac{t^2}{D_R}
\end{equation*}
Thus 
\begin{equation*}
     \mathcal{L}(\theta_{t})  \leq \frac{B}{t^2}
\end{equation*}
where $B : = 2\pi D_R^2$.

Moreover, under the PL-inequality (Definition \ref{definition : PL inequality}) we have
\begin{align*}
    d \mathcal{L}(\theta_t) &= -\nabla_\theta \mathcal{L}(\theta_t)\sqrt{4|\theta_t| + \lambda_0^2} \sqrt{\frac{2}{\pi}} \nabla_\theta \mathcal{L}(\theta_t) /\sqrt{\mathcal{L}_t}dt \\ &\leq - |\lambda_0| \sqrt{\frac{2}{\pi}} ||\nabla_{\theta} \mathcal{L}(\theta_t) ||_2^2 / \sqrt{\mathcal{L}(\theta_t)} dt \\ &\leq - 2|\lambda_0| \sqrt{\frac{2}{\pi}} \Lambda \sqrt{\mathcal{L}(\theta_t)} dt  \\
\end{align*}
Therefore by a generalized Grownwall bound we have:
\begin{equation*}
    \mathcal{L}(\theta_t) \leq \left(\sqrt{\mathcal{L}(\theta_0)} - |\lambda_0| \sqrt{\frac{2}{\pi}} \Lambda t \right)^2_+.
\end{equation*}

Finally, the KKT condition follows from the mirror flow construction. Concretely, the flow for $\theta_t$ is equivalent to a mirror flow with $\nabla_{\theta} R(\theta)$ such that:
\begin{equation*}
    d \nabla_{\theta} R(\theta_t) = - \nabla_{\theta} L (\theta_t) dt, \qquad \theta_0 = \theta_{\text{init}},
\end{equation*}
by definition of the loss and integrating over time we have that for all $t \geq 0$
\begin{equation*}
    \nabla_{\theta} R (\theta_t) = - X^T\int_0^t (X\theta_s - y) ds \in \text{span} (X),
\end{equation*}
which is exactly the KKT condition of the optimization problem.
\end{proof}

\begin{proof}[Proof of Corollary~\ref{lemma : deep mirror sign implicit}]
The argument follows the proof of Theorem~\ref{lemma : implicit bias two layer}. The
smoothened sign dynamics preserve the pairwise balances
$|u_{j,t}|-|u_{1,t}|=\lambda_0$ almost everywhere. Consequently, after the
same time change as in Theorem~\ref{lemma : implicit bias two layer}, the product
$\theta_t=\Pi_{j=1}^L u_{j,t}$ satisfies
\[
    \frac{d\theta_t}{dt}
    =
    -M_{L,\lambda_0}(\theta_t)\nabla\mathcal{L}(\theta_t),
\]
where $M_{L,\lambda_0}$ is the separable inverse metric induced by these
balances. This is precisely the $q=1$ metric characterized in
\citet[Theorem~4.8]{jacobs2026never}. Since
$(L-1)/L<1$, the corresponding potential $R(\theta,\lambda_0)$ is a Legendre
and Bregman function for every $L\geq2$. The mirror-flow implicit-bias and
convergence results from Appendix~\ref{appendix : preliminary implicit bias} therefore apply. Repeating the argument
of Theorem~\ref{lemma : implicit bias two layer} gives
$\mathcal{L}(\theta_t)=\mathcal{O}(t^{-2})$ and shows that the limiting
interpolator minimizes $R(\theta,\lambda_0)$ subject to $X\theta=y$.

For $\lambda_0\rightarrow0$, the explicit characterization in
\citet[Lemma~4.13]{jacobs2026never}, specialized to $q=1$, gives, up to
irrelevant affine and positive multiplicative terms,
\[
    R(\theta,0)
    \propto
    \sum_{i=1}^d|\theta_i|^{1+1/L}
    =
    \|\theta\|_{1+1/L}^{1+1/L}.
\]
Finally, \citet[Corollary~4.9]{jacobs2026never} gives the coercivity bound
$M_{L,\lambda_0}(\theta)\succeq
\lambda_0^{L-1}I$. Under the PL inequality (Definition \ref{definition : PL inequality}), the same differential inequality
as in Theorem~\ref{lemma : implicit bias two layer} therefore yields
\[
    T
    =
    \sqrt{\frac{\pi}{2}}\,
    \frac{\sqrt{\mathcal{L}(\theta_0)}}
         {\lambda_0^{L-1}\Lambda}.
\]
\end{proof}

\newpage

\section{Flow simulation details}
To illustrate that our derived flow description captures the underlying dynamics we simulate the flows derived in the main text.
We set the non-zero singular values to $\lambda : = [6.0, 1.2, 0.8, 0.3, 0.1]$.

We set $L =2$ in the first experiment and later compare $L =2, 20$ for the smoothened flow to illustrate the sequential feature learning emerging from depth.
For $L =2$ we can make use of the explicit solutions derived and we use them for the smoothened and smoothened delayed flows.
The initialization scale is $\theta_{\text{init}} = 1e-4$.
For the other flows, we use Runge-Kutta 45 numerical integrator from the SciPy library to simulate the flow \citep{2020SciPy-NMeth}.

We observe in Figure \ref{fig:all flows} that gradient flow indeed exhibits sequential feature learning and that the spectral flow learns all features in parallel. For the smoothened flow we observe that there is a growth separation. However, all features are learned at the same time.  
In the main text we also consider the smoothened flow and compare depths in Figure \ref{fig:depth smoothened}, where do observe sequential feature learning emerge for higher depth.

\begin{figure}
    \centering
    \includegraphics[width=0.99\linewidth]{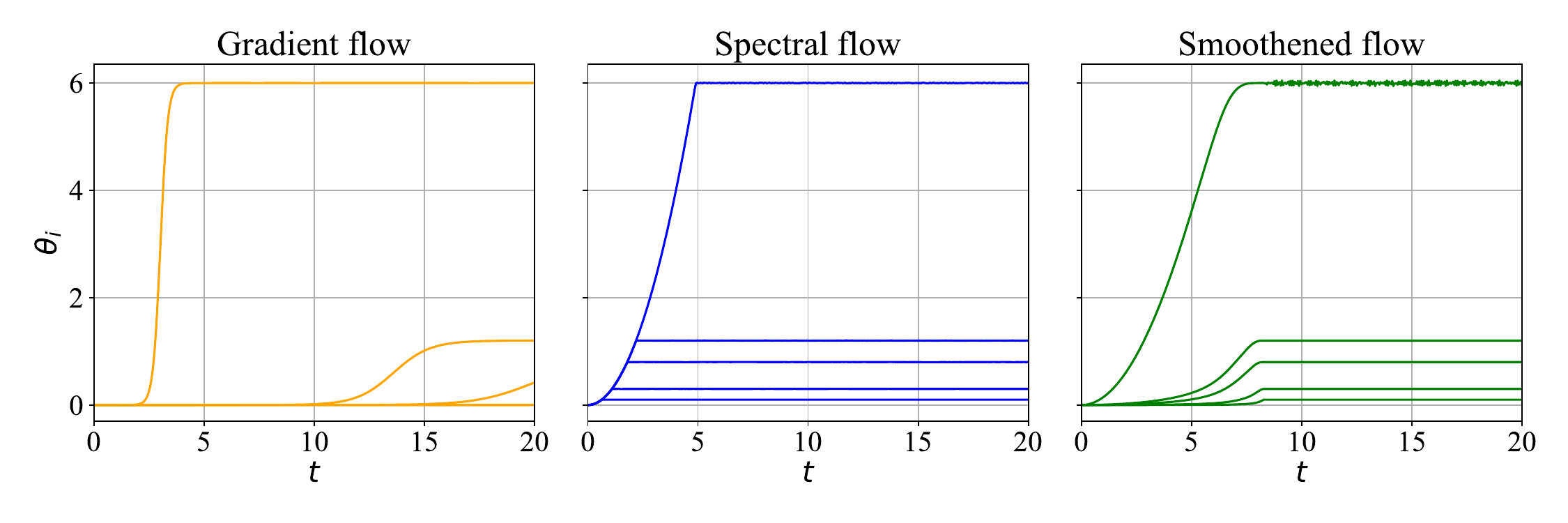}
    \caption{Simulated flows for all $3$ settings at depth $L =2$: gradient, spectral, smoothened flow.}
    \label{fig:all flows}
\end{figure}

\section{Matrix factorization details}\label{appendix : matrix factorization}

Our theoretical analysis studies a simplified matrix factorization problem. To show that our flows derived capture the dynamics in this matrix factorization problem we run the actual matrix factorization problem with small batch gradients and full batch gradients. We run gradient descent and Muon with learning rate  $\eta = 1e-2$. For all this toy experiment we set momentum $\beta = 0$. We run the experiments for $50000$ steps. 
We use whitened data, we set the dimension to of the matrix to $d =10$ and use $n =2000$ data points. The rank of the teacher model is set to $5$. 
We consider for gradient descent and Muon full batch and for Muon in addition the small batch setting with one sample. Note we still run of $50000$ steps not epochs. We use a small initialization, with $U \sim \alpha N(0,I)$ and $V \sim \alpha N(0, I)$ where $\alpha = 1e-7$ for three-way comparison and for the depth comparison it $\alpha =1e-4$.
Finally, we track in Figure \ref{fig: main} in the main text the evolution of the $5$ non-zero singular values. 

Furthermore, we conduct an additional experiment to include momentum and keeping the other hyperparameters the same. The results are presented in Figure \ref{fig: momentum}.
Moreover, we conduct an experiment for varying the learning rate and show that this does not significantly alter the flow as demonstrated in Figure \ref{fig: time lr}.

\begin{figure}
    \centering
    \includegraphics[width=0.9\linewidth]{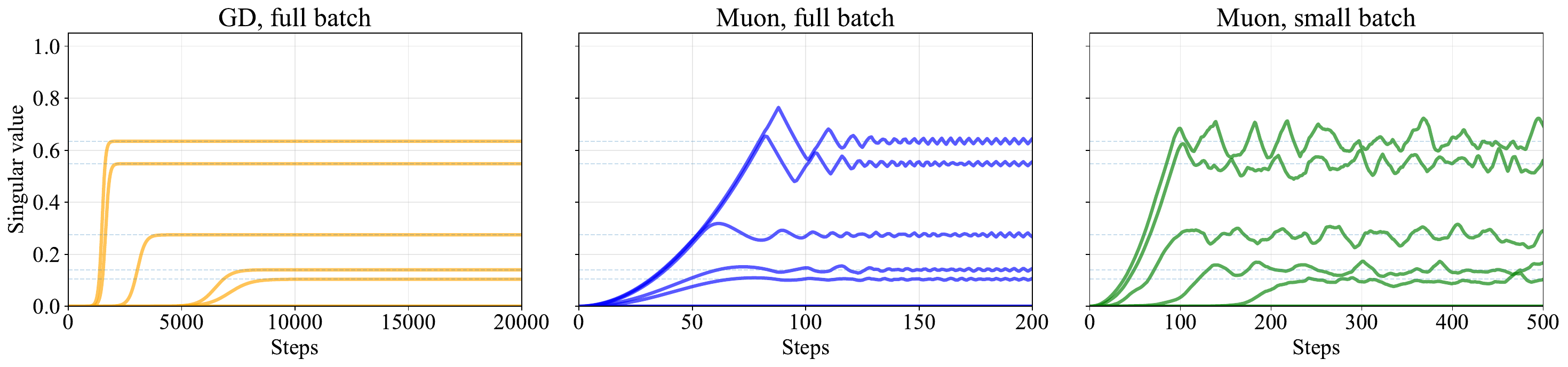}
    \caption{Gradient descent, Muon with small and large batches. The sequential learning phenomenon persists with momentum included.}
    \label{fig: momentum}
\end{figure}

\begin{figure}
    \centering
    \includegraphics[width=0.9\linewidth]{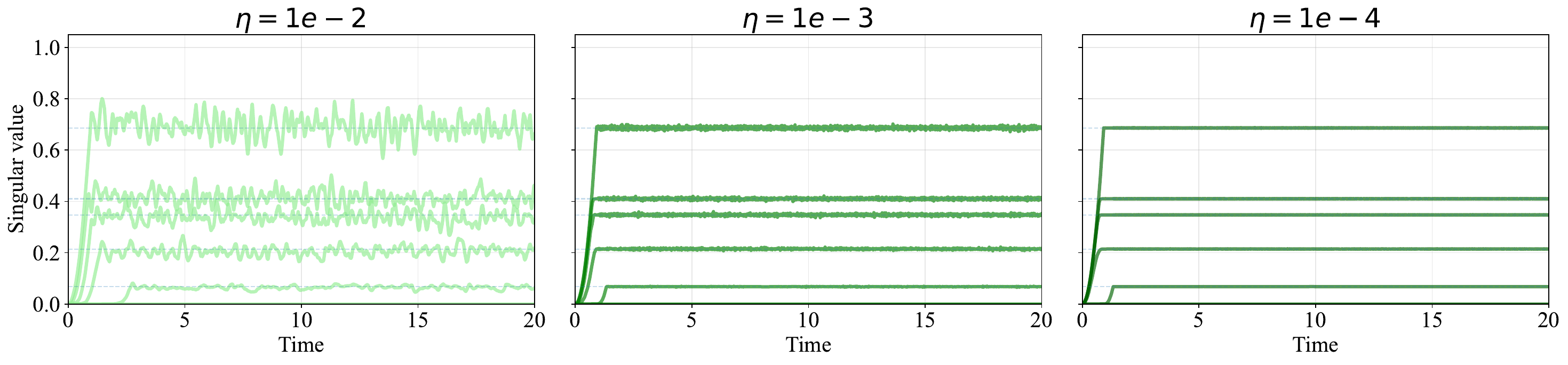}
    \caption{Matrix factorization varying the learning rate for small batch Muon. We observe that the trajectories remain similar to each other accounting for discretization time.}
    \label{fig: time lr}
\end{figure}

\section{Vision and language experimental details}\label{appendix : vision experimental details}
To understand the effect of batch size we run standard training recipes for Muon on a small DeiT for CIFAR-10.
The codebase implementation is based on \cite{Nelaturu_TurboPrune_High-Speed_Distributed}.
We conducted an exhaustive grid search over the learning rate and batch size and each configurations is ran for $3$ seeds for which we report the mean and standard $95\%$ confidence intervals. The batch size values in the sweep refer to the batch size used on each individual GPU and were selected from
\[
\{32, 64, 128, 256, 512, 1024, 2048, 4096\}.
\]
All experiments were conducted using distributed training across 4 GPUs, resulting in effective global batch sizes ranging from $128$ to $16384$ samples. The learning rate was independently varied over
\[
\eta \in \{0.005, 0.01, 0.02\},
\]
yielding a total of $24$ training configurations. Each configuration was trained for $300$ epochs using the same model architecture, optimizer (Muon), scheduler (TriangularSchedule with 10 warmup steps), weight decay set to $0.01$, CutMix, and MixUp. 
For CIFAR 100, we used the exact same configuration sweep and hyperparameters.

For ImageNet we train a DeiT-small and use Label smoothening for the loss. The batch size was set to 
\begin{equation*}
   B \in \{64, 128, 256, 512, 1024, 1536 \}*4.
\end{equation*}
The learning rate was set to 
\begin{equation*}
    \eta \in \{0.02, 0.04,0.08 \}.
\end{equation*}
Moreover, we only run one seed for the two largest batch configurations i.e. per GPU: $1024$ and $1536$. 

The results for CIFAR-10 and CIFAR-100 are reported in Figures \ref{fig: cifar10 exp} and \ref{fig: imagenet exp}. We observe that simplicity bias correlates with better performance overall.

\paragraph{Language experiment.}
We conduct the same experiment in the language domain. We consider the pretraining a LLama-type model \cite{Touvron2023LLaMAOA} on the FineWeb-Edu data set \cite{Penedo2024TheFD} for $3.2B$ tokens. For this experiment we use the codebase \cite{semenov2025benchmarking}.
As in the vision experiment we vary the batch size and learning rate and run for $3$ seeds for which we report the mean and standard $95\%$ confidence intervals.

The model we consider has $12$ layers, embedding dimension $768$, $12$ heads, and untied embeddings.
We train using D-Muon \cite{Chen2026DMuonED} with a WSD learning-rate schedule and sequence length $512$. We vary the batchsize but we also use a fixed gradient accumulation of $32$. This amounts to $32/4 = 8$ gradient accumulations for each A100 GPU.
For training, we use a base learning rate of $\eta =1e-3$, momentum parameters \(\beta_1=0.8\) and \(\beta_2=0.999\) for the additional modules trained with Adam \cite{kingma2017adammethodstochasticoptimization}, gradient clipping at \(0.5\).

For the batch size, including the gradient accumulation, we use $B \in \{256, 512, 1024\}$.
We adapt the amount of iterations to the batch size to match the amount of tokens seen. For $B =1024$ we use $6104$.

In Table \ref{tab:llm-val-acc} we report the corresponding validation accuracy of the reported effective ranks in the main text. We observe that there is a slight performance degradation when keeping the learning rate fixed and increasing the batchsize while rescaling it keeps the performance similar.

\begin{table}
    \centering
    \caption{Validation accuracy for a 162M-parameter Llama model trained on 3.2B tokens from FineWeb-Edu ($B = 256$). Results report means and approximate 95\% confidence intervals over three seeds.}
    \label{tab:llm-val-acc}
    \begin{tabular}{lcc}
        \toprule
        Batch size & Fixed $\eta$ & Scaled $\eta$ \\
        \midrule
        $256$  & $40.79 \pm 0.07$ & $40.79 \pm 0.07$ \\
        $512$  & $40.48 \pm 0.06$ & $40.71 \pm 0.09$ \\
        $1024$ & $40.18 \pm 0.02$ & $40.78 \pm 0.05$ \\
        \bottomrule
    \end{tabular}
\end{table}

\paragraph{Effective rank.}
We furthermore tracked the effective rank of each linear layer by computing it singular values $\sigma$, and numerical stability constant $\epsilon =1e-12$:
\begin{equation*}
    p = \frac{\sigma}{\sum_i \sigma_i}, \qquad \text{EffRank}(p): = \exp (- \sum_ip_i\ \text{log}(\ p_i + \epsilon) ).
\end{equation*}
In the main text we report the total sum of the effective ranks of each linear layer of the model. For the Imagenet experiment we also report the effective rank of $Q^\top K$ in Figure \ref{fig:qk rank}, where we observe the same pattern.

\begin{figure}
    \centering
    \includegraphics[width=0.5\linewidth]{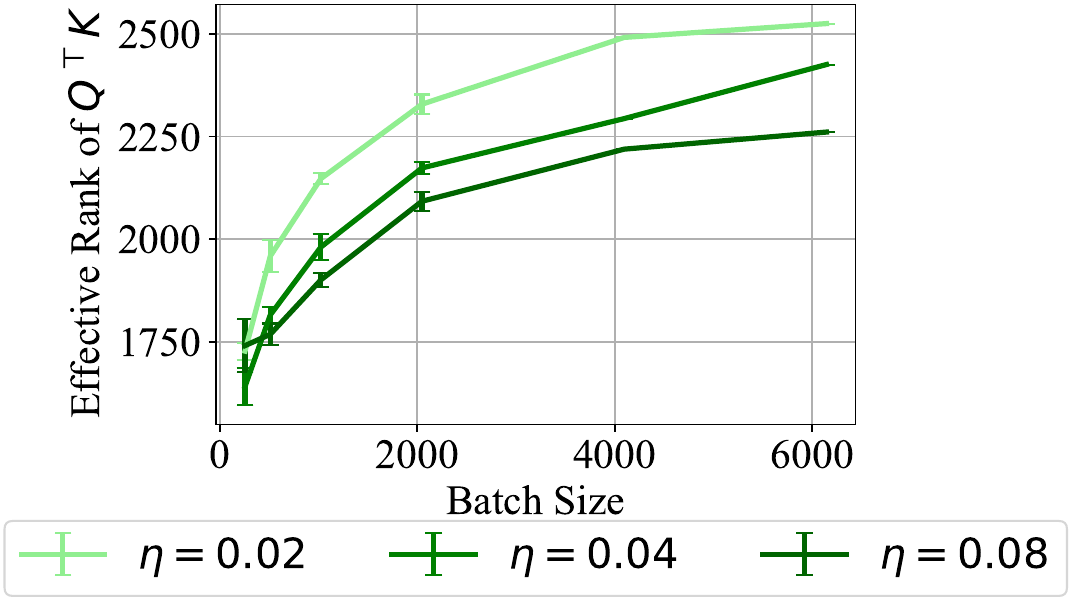}
    \caption{Effective rank of $QK$ for the DeiT-small model trained on ImageNet.}
    \label{fig:qk rank}
\end{figure}

\begin{figure} 
    \centering    \includegraphics[width=0.95\linewidth]{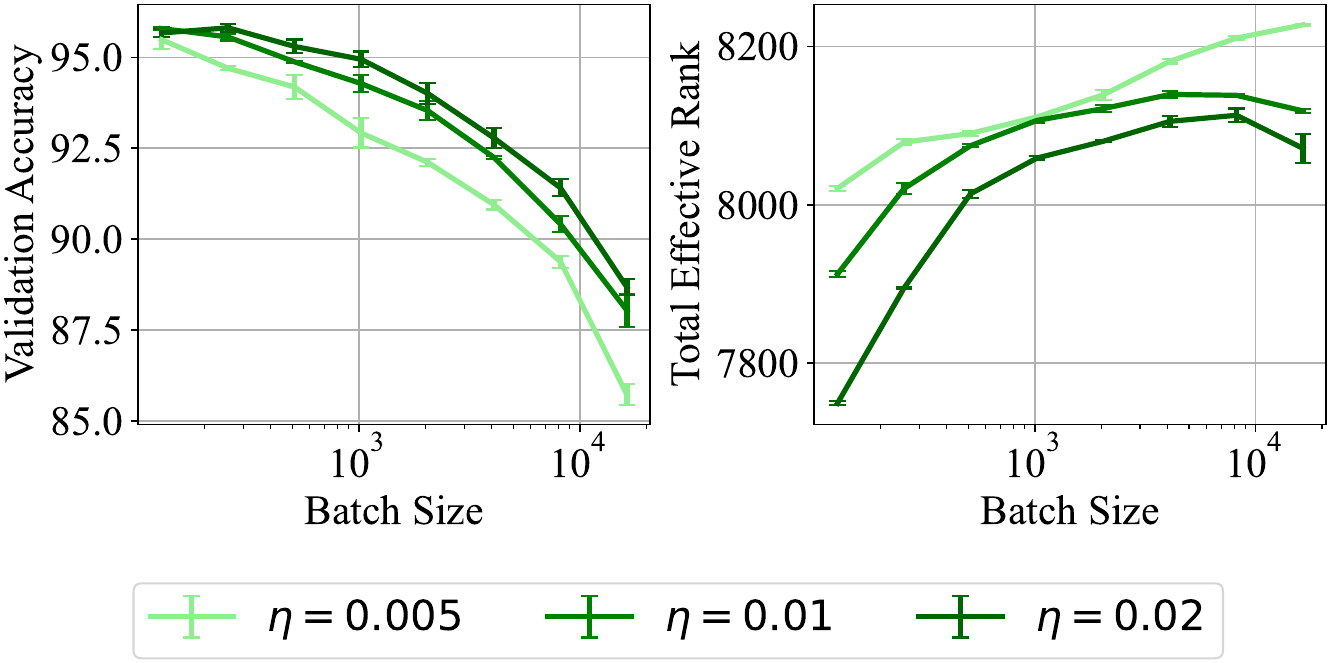}
    \caption{Final validation accuracy and effective rank of a small DeiT model trained on CIFAR 10. Small batches lead to lower rank indicating the presence of simplicity bias.}
    \label{fig: cifar10 exp}
\end{figure}

\begin{figure} 
    \centering    \includegraphics[width=0.95\linewidth]{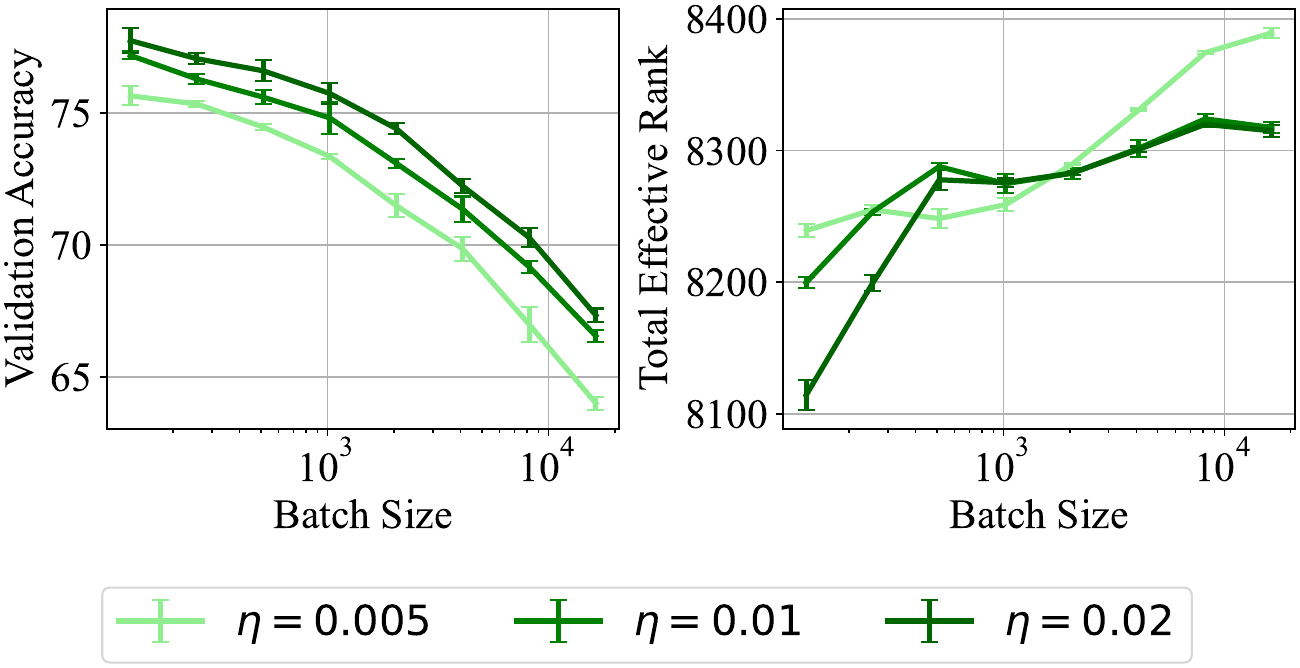}
    \caption{Final validation accuracy and effective rank of a small DeiT model trained on CIFAR-100. Small batches lead to lower rank indicating the presence of simplicity bias.}
    \label{fig: imagenet exp}
\end{figure}

\section{Diagonal linear network}\label{appendix : diagonal linear network}

In this section we provide details for the diagonal linear network experiment. We consider two depth configurations $L =2,20$. Moreover we compare sign descent with gradient descent. We also incorporate noise injections both Normal and Cauchy. The noise is added directly to the gradient with respect to $\theta$ i.e. the shared gradient.
We sample random normally distributed data with dimension $d =100$ and $n = 40$ samples. The ground truth is set to a sparse vector $\theta = (5,4,3,2,1, 0, \hdots , 0)$.

For sign descent and depth $L = 20$, we use initialization scale $\lambda_0 =1.11e-2$, learning rate $\eta =1e-4$ and train for iterations $T =100000$. For small batch, we set the batch to $1$ and the noise injection is done by sampling i.i.d. from the respective noise distributions and scaling with $\sigma =3 \sqrt{f}$, where $f$ is the mean squared error loss. The result is reported in the main text in Figure \ref{fig: implicit bias and noise injection}

For sign descent and depth $L = 2$, we use initialization scale $\lambda_0 =1.11e-3$, learning rate $\eta =1e-5$ and train for iterations $T =2000000$. For small batch, we set the batch to $1$ and the noise injection is done by sampling i.i.d. from the respective noise distributions and scaling with $\sigma =3 \sqrt{f}$, where $f$ is the mean squared error loss. The result is reported in Figure \ref{fig:dept 2 sign}. We observe that the noisy configurations reach a simpler solution than full batch but do not recover the exact solution. This is in line with an $L_{3/2}$ bias.

For gradient descent and depth $L = 2$, we use initialization scale $\lambda_0 =1.11e-3$, learning rate $\eta =1e-5$ and train for iterations $T =2000000$. For small batch, we set the batch to $1$ and the noise injection is done by sampling i.i.d. from the respective noise distributions and scaling with $\sigma =3 \sqrt{f}$, where $f$ is the mean squared error loss. The result is reported in Figure \ref{fig:dept 2 gd}. Observe that the heavy tail noise injection made the dynamics unstable. The other configurations exhibit the expected gradient flow saddle-to-saddle dynamics \cite{pesme2023saddletosaddle}.

Note that training gradient descent for high depth takes an extraordinary amount of time due to the mobility being super linear.

\begin{figure}
    \centering
    \includegraphics[width=0.9\linewidth]{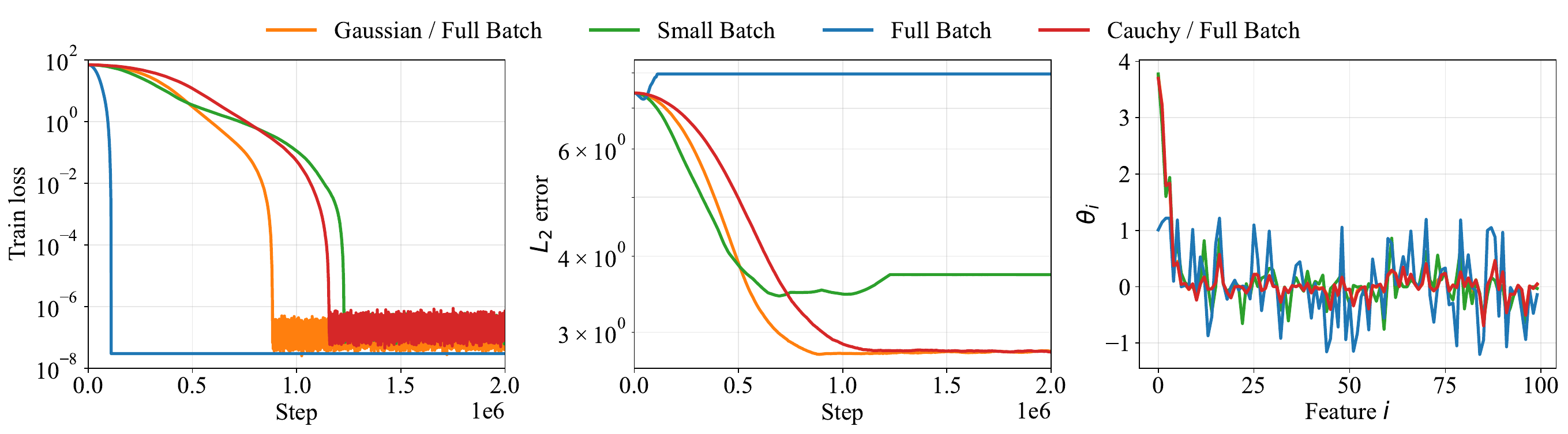}
    \caption{Noise changes the solution selected by sign descent in a diagonal linear network of depth $L =2$. We compare full-batch updates, small-batch updates, and full-batch updates with Gaussian or Cauchy noise. Left: training loss. Middle: $L_2$ reconstruction error. Right: final predictor coefficients, with first 5 non-zero. Small-batch training and both forms of injected noise recover the sparse target better but not full which is in line with the $L_{3/2}-$bias, whereas deterministic full-batch training achieves low training loss without recovery.}
    \label{fig:dept 2 sign}
\end{figure}

\begin{figure}
    \centering
    \includegraphics[width=0.9\linewidth]{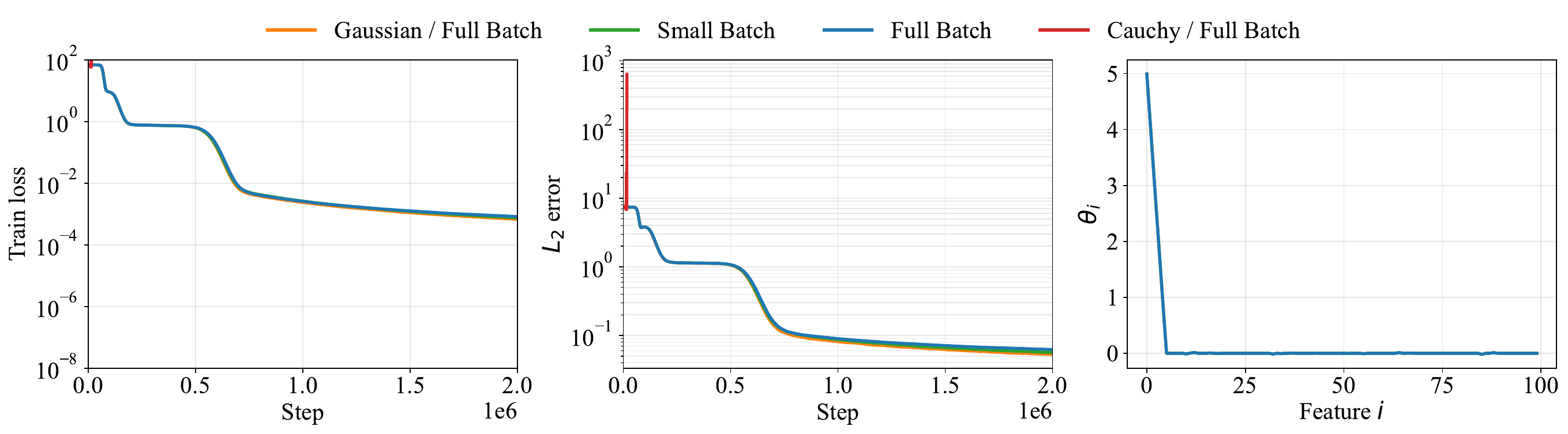}
    \caption{Noise does not change the solution selected by gradient descent in a diagonal linear network of depth $L =2$ for small initialization. We compare full-batch updates, small-batch updates, and full-batch updates with Gaussian or Cauchy noise. Left: training loss. Middle: $L_2$ reconstruction error. Right: final predictor coefficients, with first 5 non-zero. All recover the ground truth solution except the injected heavy tail noise due to too large variance.}
    \label{fig:dept 2 gd}
\end{figure}

\end{document}